\documentclass[twocolumn]{svjour3}
\usepackage{amsmath,epsfig}
\usepackage{amssymb,amsfonts,amscd,mathrsfs}
\usepackage{graphicx}
\usepackage{euscript,color}
\usepackage{multirow}

\begin{document}

\title{Rate-Distortion Analysis of Multiview Coding in a DIBR Framework}

%

\author{Boshra Rajaei \and
        Thomas Maugey \and
        Hamid-Reza Pourreza \and
        Pascal Frossard}

\institute{B. Rajaei \at
              Ferdowsi University of Mashhad, Iran \\
              Sadjad Institute of Higher Education, Mashhad, Iran\\
              Tel.: +98-511-6029000\\
              \email{boshra.rajaei@stu-mail.um.ac.ir}
           \and
           T. Maugey \at
              Signal Processing Laboratory (LTS4), \'{E}cole Polytechnique F\'{e}d\'{e}rale de Lausanne (EPFL), Switzerland \\
              \email{thomas.maugey@epfl.ch}
           \and
           H.-R. Pourreza \at
              Ferdowsi University of Mashhad, Iran \\
              \email{hpourreza@um.ac.ir}
           \and
           P. Frossard \at
              Signal Processing Laboratory (LTS4), \'{E}cole Polytechnique F\'{e}d\'{e}rale de Lausanne (EPFL), Switzerland \\
              \email{pascal.frossard@epfl.ch}
}

\maketitle
\begin{abstract}
Depth image based rendering techniques for multiview applications have been recently introduced for efficient view generation at arbitrary camera positions. Encoding rate control has thus to consider both texture and depth data. Due to different structures of depth and texture images and their different roles on the rendered views, distributing the available bit budget between them however requires a careful analysis. Information loss due to texture coding affects the value of pixels in synthesized views while errors in depth information lead to shift in objects or unexpected patterns at their boundaries. In this paper, we address the problem of efficient bit allocation between textures and depth data of multiview video sequences. We adopt a rate-distortion framework based on a simplified model of depth and texture images. Our model preserves the main features of depth and texture images. Unlike most recent solutions, our method permits to avoid rendering at encoding time for distortion estimation so that the encoding complexity is not augmented. In addition to this, our model is independent of the underlying inpainting method that is used at decoder. Experiments confirm our theoretical results and the efficiency of our rate allocation strategy.
\end{abstract}
\keywords{depth image based rendering \and multiview video coding \and rate allocation \and rate-distortion analysis}

\section{Introduction}\label{sec:introduction}

Three-dimensional video coding is a research field that has witnessed many technological revolutions in the recent years. One of them is the significant improvement in the capabilities of camera sensors. Nowadays, high quality camera sensors that capture color and depth information are easily accessible \cite{Zhang12}. Obviously this brings important modifications in the data that the 3D transmission systems have to process. A few years ago, transmission systems used disparity to improve the compression performance \cite{Merkle07,Vetro11}. Nowadays, 3D systems rather employ depth information to improve the quality experience by, for example, increasing the number of views that could be displayed at the receiver side \cite{Muller11,Tian09}. This is possible because of  depth image based rendering (DIBR) techniques \cite{Fehn04,Shao11} that project one reference image onto virtual views using depth as geometrical information. Figure \ref{fig:diagram} shows the overall structure of a DIBR multiview coder that is also considered in this paper. It includes the following steps: first, the captured views in addition to their corresponding depth maps are coded at bit rates assigned by a rate allocation method. Then the coded information are transmitted to the decoder. Finally, at the decoder the reference views are decoded and virtual views are synthesized using the depth information. View synthesis consists of two parts; projection into the virtual view location using closest reference views and inpainting for filling the holes \cite{Oh09,Cheng08} or pixels that remain undetermined after projection.

DIBR techniques offer new possibilities but also impose new challenges. One of the important questions relies in the effect of depth compression on the view synthesis performance \cite{Merkle09}; in particular, for a given bit budget $R$, what is the best allocation between depth and texture data or in other words, how can we distribute the total bitrate between color and geometrical information in order to maximize the rendering quality? It is important to note that the quality of the rendered view is of interest here, and not the distortion of depth images \cite{Merkle09,Maitre10}. This renders the problem of rate allocation particularly challenging.

The rate allocation problem has been the topic of many researches in the past few years. Allocating a fixed percentage of total budget to the texture and depth data is probably the simplest allocation policy in the DIBR coding methods \cite{Sanchez09,Daribo08,Milani11}. More efficient methods have however been proposed recently, and we discuss them in more details below.

Starting from the current multiview coding (MVC) profile of H.264/AVC \cite{AVC1,AVC2,Vetro11}, we should mention that MVC uses the distortion of depth maps to distribute the available bit budget between texture and depth images. A group of papers try to improve MVC by taking into account depth properties. In \cite{Ekmekcioglu11}, authors suggest a preprocessing step based on an adaptive local median filter to enhance spatial, temporal and inter-view correlations between depth maps and consequently, improve the performance of MVC. Using the correlation between reference views, the work in \cite{Lee11} skips some depth blocks in the coding and hence, reduces the required bit budgets for coding depth maps. Other methods try to estimate at encoder the distortion of virtual views, which then replaces the depth map distortion in the mode decision step of the MVC method \cite{AVC1}. In \cite{Liu09}, the authors provide an upper bound for virtual view distortion that is related to the depth and texture errors and the gradients of the original reference views. Another upper bound for rendered view distortion proposed when encoder has access to the original intermediate views at the encoder \cite{Nguyen09}. In \cite{Kim10}, the algorithm calculates the translation error induced by depth coding and then estimates the rendered view distortion from the texture data. In a similar approach, the work in \cite{Oh11} models the distortion at each pixel of a virtual view, including the pixels in occluded regions. These methods only try to improve the current MVC profile and without modeling the distortion rate behavior, they can not be used as general solutions for the rate allocation problem.

Beside improving the current MVC allocation policy, other papers build a complete rate-distortion model to solve the rate allocation problem of distributing total bit budget between texture and depth data in a DIBR multiview coder \cite{Davidoiu11,Maitre08,Wang12,Cheung11,Gelman12}. For example, assuming independency between depth and texture errors, the work in \cite{Davidoiu11} proposes a DR function to find the optimal allocation in a video system with one reference and one virtual view. A region-based approach for estimating the distortion at virtual views is proposed in \cite{Wang12}. Here, the allocation scheme is an iterative algorithm that needs to render one virtual view at every iteration for parameter initialization. This is very costly in terms of computational complexity. Along the same line of research, we also notice the rate allocation and view selection method proposed in \cite{Cheung11}. In this work, the authors first provide a cubic distortion model for synthetic views; they estimate the model coefficients by rendering at least one intermediate view between each reference camera views. Then, using this distortion model, a DR function is formulated and a modified search algorithm is executed to simplify rate allocation. Finally, a DR function is provided for a layer-based depth coder in \cite{Gelman12}. The main drawbacks in the above allocation schemes reside in the rendering of at least one virtual view at encoding time and in the construction of DR functions that are view dependent. Rendering at encoder side dramatically increases the computational complexity of the coder and is therefore not acceptable for realtime applications. In addition for view rendering at arbitrary camera positions, multiview systems require rate allocation strategies that work independently of reference and virtual view numbers and exact positions.

\begin{figure*}[tb]
\centering
\centerline{\epsfig{figure=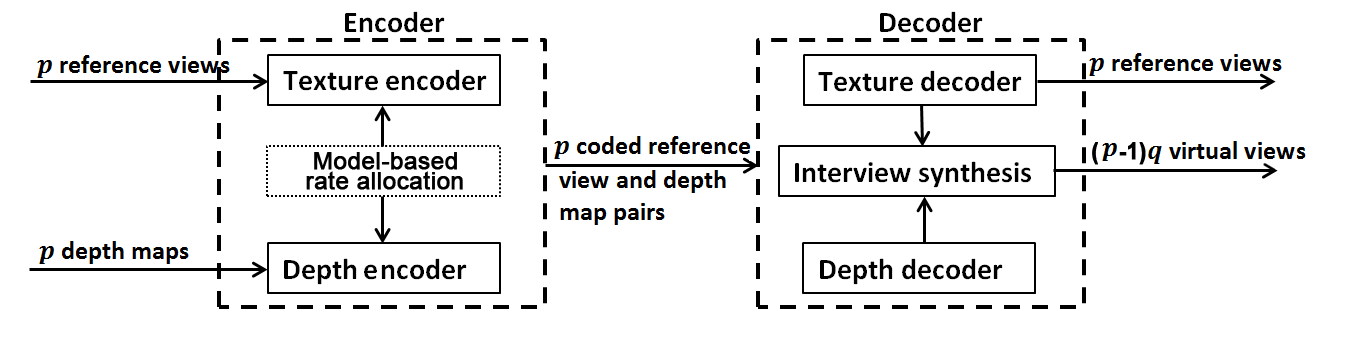,width=15cm}}
\caption{A DIBR multiview coder structure with $p$ reference cameras and $q$ equally spaced virtual views between each two reference views.}
\label{fig:diagram}
\end{figure*}

In this paper, we propose a novel DR model to solve the rate allocation problem in DIBR coding with arbitrary number of reference and virtual views and without rendering at the encoder side. Inspired by \cite{Don99,Pen05,Mal12}, we first simplify different aspects of a multiview coder and keeping only the main features. In particular, we make simple models for depth and texture coders, camera setup and scene under observation. Then, using a rate-distortion framework, a DR function is calculated and eventually is used for optimizing the allocation problem in multiview coding. An important property of our allocation method is that, we do not consider the inpainting step for virtual view synthesis at the decoder. There are two reasons for this choice: first, we want to design an allocation strategy that is independent of the actual inpainting method; second, we focus on the effect of view projections, which is mostly related to the geometry of the scene. Experimental results show that our model-based rate allocation method is efficient for different system configurations. The approach proposed in this paper has low complexity but provides a distortion that is not far from optimum, and in particular it outperforms a priori rate allocation strategies that are commonly used in practice.

The organization of this paper is as follows. Next section clarifies some notations, camera and scene model and rate-distortion framework as it is used in Section \ref{sec:proofs} for calculation of our allocation model. Section \ref{sec:implementation} addresses a few optimization issues. Finally, Section \ref{sec:simulations} includes the details of our experimental results, parameter values and comparison to other allocation strategies.

\section{Framework and model}\label{sec:framework}
In this section we define a few preliminary concepts that are used in our rate distortion study. Our main focus is the problem of distributing the available bit rate between several reference views and depth maps in a DIBR multiview video application, such that the distortion over all reference and rendered views at the decoder is minimized. In particular we are interested in constructing a rate-distortion for rate allocation without explicit view synthesis at the encoder.

We first construct a rate-distortion model for a typical wavelet-based texture coder and a simple quantized-based depth map coder, along with a simple model of scene. We present below some general notations and the wavelet framework. Then we describe our rate-distortion analysis framework, our model of the scene and of the camera.

\subsection{Notation}\label{ssec:Notation}
Let $\mathbb{R}$ be the set of real numbers. The $L_2$-norm of a function $f: \mathbb{R}^2  \rightarrow \mathbb{R}$ is defined as
$\|f\|_2 = \left( \iint f^2(t_1,t_2) dt_1dt_2 \right)^\frac{1}{2} $. Then, $L_2(\mathbb{R}^2)$ is the set of all functions $f: \mathbb{R}^2  \rightarrow \mathbb{R}$ with a finite $L_2$-norm. The angle bracket represents the inner product of two functions in this space, i.e., for $f, g \in L_2(\mathbb{R}^2)$ we have
\[
\langle f,g  \rangle =  \iint f(t_1,t_2)g(t_1,t_2) dt_1dt_2.
\]

Then, let $\phi: \mathbb{R} \rightarrow \mathbb{R}$ and $\psi: \mathbb{R} \rightarrow \mathbb{R}$ be the univariate scaling and wavelet functions of an orthonormal wavelet transform, respectively \cite{Mal97}. The shifted and scaled forms of these functions are denoted by $\psi_{s,n}(t) = 2^{s/2} \psi(2^st-n)$ and $\phi_{s,n}(t) = 2^{s/2} \phi(2^st-n)$, where $s,n \in \mathbb{Z}$ are respectively the scaling and shifting parameters and $\mathbb{Z}$ is the set of integer numbers. The most standard construction of two-dimensional wavelets relies on a separable design that uses $\Psi_{s,n_1,n_2}^1(t_1,t_2) = \phi_{s,n_1}(t_1) \psi_{s,n_2}(t_2)$, $\Psi_{s,n_1,n_2}^2(t_1,t_2) = \psi_{s,n_1}(t_1) \phi_{s,n_2}(t_2)$, and $\Psi_{s,n_1,n_2}^3(t_1,t_2) = \psi_{s,n_1}(t_1) \psi_{s,n_2}(t_2)$ as the bases. It is proved in \cite{Mal97} that separable wavelets provide an orthonormal basis for $L_2(\mathbb{R}^2)$. Therefore, any function $f \in  L_2(\mathbb{R}^2)$ can be written as
\[
f(t_1,t_2) = \sum_{s, n_1,n_2} \sum_{i=1}^3 C_{s,n_1,n_2}^i \Psi_{s,n_1,n_2}^i(t_1,t_2),
\]
where, for every $s,n_1, n_2 \in \mathbb{Z}$,
\[
C_{s,n_1,n_2}^i=\langle f,\Psi_{s,n_1,n_2}^i\rangle, \  i=1,2,3.
\]

Practically, the wavelet transform defines a scale $s_0$ as the largest scale value. If we call $C_{s,n_1,n_2}^i$ high frequency bands, at $s_0$ we thus have only one low frequency band $\langle f,\Phi_{s_0,n_1,n_2}\rangle$, where $\Phi_{s_0,n_1,n_2}(t_1,t_2)\mbox{ }= \phi_{s_0,n_1}(t_1) \phi_{s_0,n_2}(t_2)$.

\subsection{Scene and camera configuration model}\label{ssec:scene_model}
We use a very simple model of the scene in our analysis we consider foreground objects with arbitrary shapes and flat surfaces on a flat background\footnote{The extension of our analysis to the scenes with $C^\alpha$ regular surfaces is straightforward.}. Additionally, even though a real scene is 3D, our model is a collection of 2D images as we consider projections of the 3D scene into cameras 2D coordinates.

Let $\mathcal{H}^Q(\mathbf{\Omega})$ be the space of 2D functions, $f: \mathbb{R}^2 \rightarrow \mathbb{R}$, on the interval $[0,1]^2 \subset \mathbb{R}^2$, where $Q$ is the number of foreground objects and $\mathbf{\Omega}=\{\Omega_i, i=0, \dots, Q-1\}$ denotes the foreground objects. We define $f\in \mathcal{H}^Q(\mathbf{\Omega})$ as
\begin{eqnarray}
f(t_1,t_2) = \left\{\begin{array}{ll}
1, & \quad \mbox{if $\exists i: (t_1, t_2)\in \Omega_i$}\\
0, &  \quad  \mbox{otherwise}
\end{array}\right.
\end{eqnarray}

Our RD analysis is performed on $\mathcal{H}^1(\mathbf{\Omega})$ where $\mathbf{\Omega}=\{\Omega_0\}$. The extension to multiple foreground objects follows naturally. For the sake of clarity, we skip superscript notation and represent this class by $\mathcal{H}(\Omega)$. Figure \ref{fig:sample_scenemodel} shows a sample function from $\mathcal{H}(\Omega)$. This figure shows one arbitrary shape foreground object on a flat background as it is projected into a 2D camera plane.

In addition to our simple scene model, we describe now our camera configuration model. Let us denote as $\mathcal{B}^p_q(\mathbf{\mathcal{P}})$ a configuration with $p$ reference cameras and $q$ equally spaced intermediate views between each two reference views. Then, $\mathbf{\mathcal{P}}$ is the set of intrinsic and extrinsic parameters for reference and virtual cameras. It is defined as $\mathbf{\mathcal{P}}=\{(A_i,R_i,T_i): i=0, \dots, p-1\}\cup \{(A'_j,R'_j,T'_j): j=0, \dots, (p-1)q\}$, where $A_i$ and $R_i$ are respectively the intrinsic and rotation matrices of $i$th reference camera and $T_i$ is its corresponding translation vector. The same parameters for virtual cameras are given by $A'_j$, $R'_j$ and $T'_j$. Figure \ref{fig:diagram} shows a multiview coder that corresponds to a $\mathcal{B}^p_q(\mathbf{\mathcal{P}})$ configuration. In this paper, we consider that a texture image and a depth map are coded and are sent to the decoder for each reference view. In our camera configuration $\mathcal{B}^p_q(\mathbf{\mathcal{P}})$, we have $p$ pairs of texture images and depth maps to be coded at each time slot. The number of coded views is given by system design criteria or rate-distortion constraints \cite{Cheung11}.

\begin{figure}[tb]
\centering
\centerline{\epsfig{figure=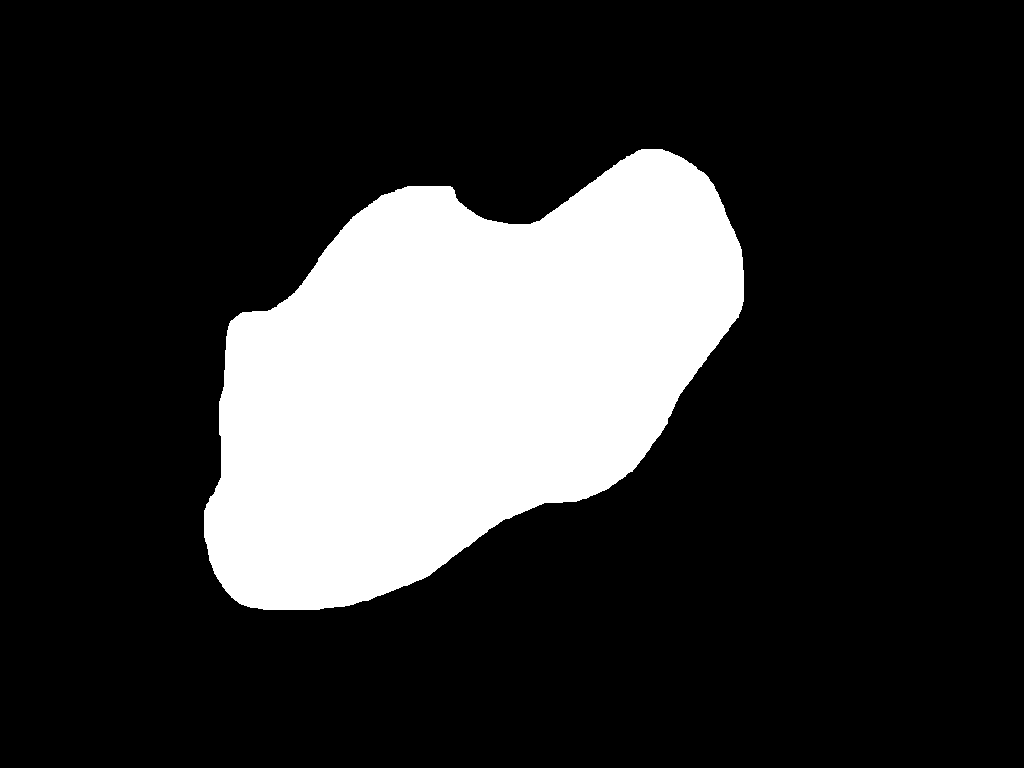,width=5cm}}
\caption{A sample function in $\mathcal{H}^1(\mathbf{\Omega})$.}
\label{fig:sample_scenemodel}
\end{figure}

\subsection{Rate-distortion framework}\label{ssec:RDframework}
Let us define three classes of signals $\mathcal{T} \subset L_2(\mathbb{R}^2)$, $\mathcal{V} \subset L_2(\mathbb{R}^2)$ and $\mathcal{D} \subset L_2(\mathbb{R}^2)$ as reference images, virtual views and depth maps, respectively. Then, define $\mathcal{F}$ as the class of all $f=\{(t_i,d_i):t_i \in \mathcal{T}, d_i \in \mathcal{D}, i=0, \dots, p-1\}$ and similarly, $\mathcal{G}$ as the class of all $g=\{(t_i,v_j):t_i \in \mathcal{T}, v_j \in \mathcal{V}, i=0, \dots, p-1, j=0, \dots, q-1\}$. Here, $\mathcal{F}$ represents all the coded data and $\mathcal{G}$ indicates the set of all reference and virtual views that are reconstructed at the deocer.

A typical multiview video coding strategy consists of at least three building blocks namely, encoder, decoder and rendering algorithm. Consider a texture encoding scheme $\mathcal{E}_\mathcal{T}: \mathcal{T} \rightarrow \{1,2, \ldots, 2^{R_\mathcal{T}}\}$ and similarly a depth encoding scheme $\mathcal{E}_\mathcal{D}: \mathcal{D} \rightarrow \{1,2, \ldots, 2^{R_\mathcal{D}}\}$, where $R_\mathcal{T}=\sum_{i=0}^{p-1}R_{t_i}$ and $R_\mathcal{D}=\sum_{i=0}^{p-1}R_{d_i}$ are the total number of allocated bits to texture and depth information, respectively. This represents a total rate $R=R_\mathcal{T}+R_\mathcal{D}$ bit at the encoder. Correspondingly, we call the texture and depth decoders as $\Gamma_\mathcal{T}: \{1,2, \ldots, 2^{R_\mathcal{T}}\} \rightarrow \mathcal{T}$ and $\Gamma_\mathcal{D}: \{1,2, \ldots, 2^{R_\mathcal{D}}\} \rightarrow \mathcal{D}$. Finally, we denote the rendering scheme as $\Upsilon: \mathcal{F} \rightarrow \mathcal{G}$. Each rendering scheme has two parts: first, the projection into intermediate view using a few closer reference views and their associated depth maps and second, filling the holes that are not covered by any of these reference views. In this paper we are using only the two closest reference views for rendering. Furthermore, we assume in our theoretical analysis that we have no hole in the reconstructed images. Thus, rendering becomes a simple projection of the closer reference views on an intermediate view using depth information. 

Let us denote the decoded data as $\hat{f} = \Gamma_R(\mathcal{E}_R(f))$. The distortion in the rendered version of the data, $\hat{g} = \Upsilon(\hat{f})$, and the original version, $g = \Upsilon(f)$, is given by\footnote{In this paper we consider the $\ell_2$ distortion. However extensions to other norm losses is straightforward.}
\begin{equation}\label{equ:D}
D(g,\hat{g}) = \sum_{i=0}^{p-1}\|t_i-\hat{t}_i\|_2+\sum_{j=0}^{q-1}\|v_j-\hat{v}_j\|_2.
\end{equation}
We finally define the distortion of the coding scheme as the distortion of the encoding algorithm in the least favorable case, i.e.,
\begin{equation}\label{equ:D(R)definition}
D_{\mathcal{E}, \Gamma, \Upsilon}(R) = \sup_{g \in \mathcal{G}} D(g, \hat{g}).
\end{equation}
When the encoding, decoding and rendering strategies are clear from the context we use a simpler notation $D(R)$ and call it the distortion-rate (DR) function.

\section{Theoretical analysis}\label{sec:proofs}
In this section we propose a DR function based on our simple model of scenes $\mathcal{H}^Q(\mathbf{\Omega})$. We first consider a simple camera configuration $\mathcal{B}^1_1(\mathbf{\mathcal{P}})$ with only one reference view and one virtual view. Then we extend analysis to more virtual views with camera configuration $\mathcal{B}^1_q(\mathbf{\mathcal{P}})$ and to more reference views with configuration $\mathcal{B}^p_q(\mathbf{\mathcal{P}})$. For each class of functions the RD analysis is built in the wavelet domain where the distortion is the distance between the original and coded wavelet coefficients. The distortion in wavelet domain is equal to the distortion in the signal domain when wavelets form an orthonormal basis, while such a sparse representation of our virtual and reference views simplifies the RD analysis. Assuming that coding has negligible effect on the average signal value, then we can ignore the distortion in the lowest frequency band. Therefore, in the following analysis we only focus on the distortion in high frequency band coefficients. In all the proofs, we assume that the wavelets have a finite support of length $\ell$ and that their first moments are equal to zero.\\

\begin{theorem}\label{thm:onereference-onevirtual}
The coding scheme that uses wavelet-based texture coder and uniform quantization depth coder, ach-ieves the following DR function on scene configuration $\mathcal{H}^1(\mathbf{\Omega})$ and camera setup $\mathcal{B}^1_1(\mathbf{\mathcal{P}})$
\begin{equation}D(R_t,R_d) \sim O(2\mu \sigma^2 2^{\alpha R_t}+K\frac{\Delta Z}{Z_{min}[2^{\beta R_d}+\Delta Z]}), \nonumber \end{equation}
where $R_t$ and $R_d$ are the texture and depth bit rates, $K=A'R'|T-T'|$ depends on camera parameters, $\Delta Z = Z_{max}-Z_{min}$, $Z_{max}$ and $Z_{min}$ are the maximum and minimum depth values in the scene, $\sigma^2$ is the reference frame variance and $\mu$, $\alpha$ and $\beta$ are positive constants.
\end{theorem}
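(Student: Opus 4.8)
The plan is to evaluate the worst-case distortion of \eqref{equ:D(R)definition} for the single-reference, single-virtual configuration, where \eqref{equ:D} collapses to $D=\|t_0-\hat t_0\|_2+\|v_0-\hat v_0\|_2$, and to show that it splits into a texture-controlled part governed by $R_t$ and a geometry-controlled part governed by $R_d$. Since the rendering $\Upsilon$ is here a pure depth-driven projection (no holes by assumption), I would write $v_0-\hat v_0=\Upsilon(t_0,d_0)-\Upsilon(\hat t_0,\hat d_0)$ and separate the two error sources by the triangle inequality,
\[
\|v_0-\hat v_0\|_2\le \|\Upsilon(t_0,d_0)-\Upsilon(\hat t_0,d_0)\|_2+\|\Upsilon(\hat t_0,d_0)-\Upsilon(\hat t_0,\hat d_0)\|_2,
\]
so that the first summand isolates the propagated \emph{texture} error (correct geometry, wrong pixel values) and the second isolates the \emph{depth} error (correct values, wrong warp). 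The sup over $\mathcal{G}$ is then taken by pushing each summand to its least favorable configuration.

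For the texture contribution I would pass to the wavelet domain, where by orthonormality (Parseval) the signal-domain $L_2$ distortion equals the coefficient-domain distortion, so one can reason directly on the high-frequency coefficients $C^i_{s,n_1,n_2}=\langle t_0,\Psi^i_{s,n_1,n_2}\rangle$. Because $t_0\in\mathcal{H}(\Omega)$ is piecewise constant, the zero-first-moment and finite-support-$\ell$ hypotheses force the coefficients away from $\partial\Omega$ to vanish, leaving only $O(2^{s})$ significant coefficients at scale $s$ (those whose support meets the object boundary) with controlled magnitudes. Feeding this coefficient budget into the operational rate--distortion behavior of the wavelet coder, as in \cite{Don99,Mal12}, yields $\|t_0-\hat t_0\|_2\sim\mu\sigma^2 2^{\alpha R_t}$, with the frame variance $\sigma^2$ as the natural scale factor. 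Since a hole-free projection is approximately norm preserving, the propagated texture term in $\|v_0-\hat v_0\|_2$ is of the same order, and adding it to the reference term produces the factor $2\mu\sigma^2 2^{\alpha R_t}$.

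The depth contribution is where the real work lies. I would model the depth coder as a uniform scalar quantizer over the dynamic range $\Delta Z=Z_{max}-Z_{min}$, so that $R_d$ bits give an effective $\sim 2^{\beta R_d}$ reconstruction levels and a bounded depth error of order $\Delta Z/2^{\beta R_d}$. The key is to transport this value error through the DIBR warp: a scene point at depth $Z$ is displaced between the reference and virtual cameras by a disparity proportional to $A'R'|T-T'|/Z=K/Z$, so a depth error perturbs the projected position through the nonlinear factor $1/Z$, whose sensitivity is maximized at $Z_{min}$. Combining the quantization step with this $1/Z$ sensitivity, simplifying, and bounding over the scene is what should produce the geometric term $K\,\Delta Z/\bigl(Z_{min}[2^{\beta R_d}+\Delta Z]\bigr)$; the induced shift of the object boundary by this amount is precisely what the $L_2$ distance of the edge-like virtual view measures.

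I expect this depth/geometry step to be the main obstacle. The difficulty is threefold: linearizing the nonlinear $1/Z$ dependence of the warp to convert a depth-value error into a boundary displacement, identifying the least-favorable depth $Z_{min}$ that realizes the sup in \eqref{equ:D(R)definition}, and carrying the camera parameters $A'$, $R'$, $|T-T'|$ through cleanly to land on the precise rational form in $2^{\beta R_d}$ rather than a plain exponential. A secondary subtlety is justifying the error decoupling used above: the triangle inequality gives a valid upper bound, but matching the claimed order requires the texture--depth cross term to be negligible, which is exactly what the no-hole, smooth-projection assumption buys us.
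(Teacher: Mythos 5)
Your plan follows essentially the same route as the paper: a wavelet-domain analysis of the texture error concentrated on the $O(2^s)$ boundary coefficients per scale, plus a geometric shift of the foreground object induced by depth quantization, with the texture term replaced by the standard $\mu\sigma^2 2^{-\alpha R}$ model and the factor $2$ coming from the reference and virtual views being hit equally. The differences are in execution. First, where you split the virtual-view error by a triangle inequality and then measure the boundary shift directly in the signal domain, the paper enumerates three wavelet-domain error sources: truncation of small coefficients below scale $b$, displacement of the significant coefficients at scales finer than $s_1$ (where $\ell 2^{-s_1-1}<\Delta_0<\ell 2^{-s_1}$ ties the scale cutoff to the shift $\Delta_0$), and quantization of the surviving large coefficients; summing these gives $D=O(2^{-b}+\Delta_0)$. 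The two bookkeepings agree up to constants, but note that the linear dependence on $\Delta_0$ presupposes the squared-$L_2$ convention (a shifted unit-height edge contributes energy $\propto\Delta_0$, not $\Delta_0^2$), so your phrase about ``the $L_2$ distance of the edge-like virtual view'' should be read as the squared norm to match the claimed order. Second, the step you flag as the main obstacle --- converting the depth quantization error into the precise rational form $K\Delta Z/\bigl(Z_{min}[2^{\beta R_d}+\Delta Z]\bigr)$ --- is not derived in the paper at all: the proof simply substitutes the maximum-shift-error formula of \cite{Kim10} for $\Delta_0$. Your proposed derivation via the $1/Z$ sensitivity of the disparity, maximized at $Z_{min}$, is exactly the content of that cited result, so carrying it out would make the argument more self-contained than the paper's; just be aware that the exponent $\beta$ is introduced there as a model parameter of the depth coder rather than something you should expect to pin down from first principles.
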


\begin{proof}
For the camera configuration $\mathcal{B}^1_1(\mathbf{\mathcal{P}})$ we have $g=\{(t_0,v_0)\}$ with one reference view and one virtual view. In all proofs we consider that there is no occluded region for the sake of simplicity. Inspired by \cite{Pra99}, we consider the same quantization level for each wavelet coefficient. This suboptimal choice of quantization will only affect constant factors of the DR function and will not change the final upper bound equation. In addition to this, for depth map coding, we assume a quantization-based coder that simply splits depth image into uniform square areas and for each square the average depth is quantized and coded. Therefore, if we assign $b$ bits for coding each wavelet coefficient in the reference frame and $b'$ bits for coding each depth value, there will be three sources of distortion after decoding and rendering at the decoder side, \\

First at every scale $s$ the number of non-zero wavelet coefficients is $3\times d\Omega \ell2^s$ where $d\Omega$ is the boundary length of $\Omega$ in $v_0$ and 3 factor is because of three wavelet bands. Using the definitions of section \ref{ssec:Notation}, the magnitude of coefficients at scale $s$ of a standard wavelet decomposition is bounded by
\begin{align}
&|C_{s,n_1,n_2}^1| \leq \nonumber\\
&\int_{t_0}^{t_0+\ell2^{-s}}\int_{t_0^\prime}^{t_0^\prime+\ell2^{-s}}|f(t_1,t_2)||\Psi_{s,n_1,n_2}^1(t_1,t_2)|dt_1dt_2 \leq \nonumber\\
& 2^s \int_{t_0}^{t_0+\ell2^{-s}}\int_{t_0^\prime}^{t_0^\prime+\ell2^{-s}}|\phi(2^s t_1-n)\psi(2^s t_2-n)|dt_1dt_2 \leq \nonumber\\
&2^{-s}. \nonumber
\end{align}
We have similar results in case of $|C_{s,n_1,n_2}^2|$ and $|C_{s,n_1,n_2}^3|$. By assigning $b$ bits for coding each coefficient, clearly all the coefficients at scale $s$, $2^{-s}<2^{-b-1}$, will be mapped into zero. Therefore, the first source of coding distortion $D_1$ is
\begin{equation}\label{equ:D_1}
D_1= 3\ell d\Omega \sum_{s=b+2}^\infty 2^s\times(2^{-s})^2=c_1 2^{-b}
\end{equation}
where $c_1=12\ell d\Omega$. Note that a factor of 2 is added here because the error of skipping small wavelet coefficients affects distortion in both $t_0$ and $v_0$ similarly.\\

Then, depth map quantization also introduces distortion as it leads to shifts in foreground objects. Recall that we are calculating distortion in the wavelet domain. Consider $s_1$ as the largest scale with wavelet support length that is smaller than the amount of shift in foreground object. Non-zero wavelet coefficients at scales larger or equal to $s_1$ suffer from position changes due to depth coding. Assume $\Delta_0$ as the maximum position error in $v_0$ with a $b'$ bits quantization-based depth coder. Then we have $\ell2^{-s_1-1}<\Delta_0<\ell2^{-s_1}$. Hence, our second source of error, $D_2$, is
\begin{equation}\label{equ:D_2}
D_2=2\times 3\ell d\Omega \sum_{s=s_1+1}^{b+1} 2^s (2^{-s})^2=c_1 (2^{-s_1}-2^{-b-1}).
\end{equation}
Here the factor 2 is due to shift of significant coefficients.\\

Finally distortion is generated by quantization of non-zero coefficients. Using the definition of $b$ and $s_1$, for the reference frame, $t_0$, we have large coefficients quantization error in $s\leq b+1$ and for virtual view, $v_0$, this happens at $s\leq s_1$. Thus, for the last source of distortion we have
\begin{equation}\label{equ:D_3}
\begin{split}
D_3&=3\ell d\Omega [\sum_{s=1}^{b+1}2^j (2^{-b-1})^2+\sum_{s=1}^{s_1}2^s(2^{-b-1})^2] \\
&= c_1 (2^{-b}+2^{s_1}2^{-2b}).
\end{split}
\end{equation}
\\

Using \eqref{equ:D_1}, \eqref{equ:D_2} and \eqref{equ:D_3} the total distortion is
\[D=c_1 [2^{-b}+2^{-s_1}-2^{-b-1}+2^{-b}+2^{s_1}2^{-2b}].\]
From $s_1$ and $\Delta_0$ definitions we have $s_1\leq b$ and $s_1 \geq \log \Delta_0^{-1}-1$. Therefore, we can simplify the above equation as
\[D=O(2^b+\Delta_0).\]
The first term only depends on texture coding errors and the second term on depth quantization. We replace the texture coding term with a simple distortion model $\mu \sigma^2 2^{-\alpha R}$ \cite{Cover06} where $\mu$ and $\alpha$ are model parameters, $\sigma^2$ is the source variance and $R$ is the target bit rate. Using the formulation of maximum shift error in \cite{Kim10} for the depth distortion term we finally have
\begin{equation}
\begin{split}
D(R_t,R_d) & = O(2\mu \sigma^2 2^{-\alpha R_t}\\
&+A'R'|T-T'|\frac{Z_{max}-Z_{min}}{Z_{min}[2^{\beta R_d}+Z_{max}-Z_{min}]})
\end{split}
\end{equation}
where $\beta$ is another model parameter that depends on depth coding method.\\
\end{proof}

We now extend the above analysis to more complex camera configurations. We first consider $q$ virtual views in a $\mathcal{B}^1_q(\mathbf{\mathcal{P}})$ configuration. \\

\begin{theorem}\label{thm:onereference-multiplevirtual}
The coding scheme that uses wavelet-based texture coder and uniform quantization depth coder, ach-ieves the following DR function on scene configuration $\mathcal{H}^1(\mathbf{\Omega})$ and camera setup $\mathcal{B}^1_q(\mathbf{\mathcal{P}})$
\begin{equation}
\begin{split}
D(R_t,R_d) & \sim O((q+1)\mu \sigma^2 2^{\alpha R_t}\\
& + \sum_{j=0}^{q-1} K_j \frac{\Delta Z}{Z_{min}[2^{\beta R_d}+\Delta Z]}), \nonumber
\end{split}
\end{equation}
where $R_T$ and $R_D$ are the texture and depth rates, $K_j=A'_j R'_j |T-T'_j|$, for $j=0,\dots,q-1$ depends on camera parameters, $\Delta Z = Z_{max}-Z_{min}$, $Z_{max}$ and $Z_{min}$ are the maximum and minimum depth values in the scene, $\sigma^2$ is the reference frame variance and $\mu$, $\alpha$ and $\beta$ are positive constants.\\
\end{theorem}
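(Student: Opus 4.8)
The plan is to extend the single-virtual-view analysis of Theorem~\ref{thm:onereference-onevirtual} to the $\mathcal{B}^1_q(\mathbf{\mathcal{P}})$ configuration by treating each of the $q$ virtual views essentially in the same way as the single virtual view $v_0$, then summing the contributions. The key observation is that in this configuration we still code a single reference texture $t_0$ and a single depth map $d_0$, but we now render $q$ intermediate views $v_0,\dots,v_{q-1}$ from that same coded data. So the first step is to re-examine each of the three distortion sources $D_1$, $D_2$, $D_3$ and see how each scales when we accumulate error over $q+1$ reconstructed signals (the reference plus the $q$ virtual views) rather than just two.

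Concretely, I would revisit the texture-coding distortion $D_1$ first. The error from discarding small wavelet coefficients affects $t_0$ and each $v_j$ alike; in Theorem~\ref{thm:onereference-onevirtual} the factor of $2$ in \eqref{equ:D_1} came from counting $t_0$ and $v_0$. Here the same error propagates into $t_0$ and all $q$ virtual views, so I expect the factor $2$ to become $q+1$, replacing the leading term $2\mu\sigma^2 2^{-\alpha R_t}$ with $(q+1)\mu\sigma^2 2^{-\alpha R_t}$. That accounts exactly for the first summand in the claimed DR function. Next I would handle the depth-induced shift error $D_2$ and the quantization error $D_3$. The crucial point is that the \emph{position} error $\Delta_0$ depends on the camera geometry through $K=A'R'|T-T'|$, and each virtual view $v_j$ has its own parameters $(A'_j,R'_j,T'_j)$, hence its own shift $\Delta_j$ governed by $K_j=A'_j R'_j|T-T'_j|$. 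So I would repeat the argument yielding $D=O(2^{-b}+\Delta_0)$ once per virtual view, obtain a per-view depth term $K_j \Delta Z / (Z_{min}[2^{\beta R_d}+\Delta Z])$, and sum over $j=0,\dots,q-1$ to produce $\sum_{j=0}^{q-1} K_j \Delta Z/(Z_{min}[2^{\beta R_d}+\Delta Z])$, which is the second summand.

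The main obstacle I anticipate is bookkeeping of the scale thresholds $s_1$ in the depth-shift analysis, because now there is a distinct $s_1^{(j)}$ for each virtual view (the largest scale whose wavelet support is below the shift $\Delta_j$). I would need to verify that summing $D_2^{(j)}+D_3^{(j)}$ over $j$ still collapses, as in the one-view case, to a sum of terms of order $\Delta_j$, so that the mixed cross-scale terms $2^{s_1^{(j)}}2^{-2b}$ remain subdominant under the same inequalities $s_1^{(j)}\le b$ and $s_1^{(j)}\ge \log\Delta_j^{-1}-1$ used before. A subtle point is whether the boundary length $d\Omega$ seen in each virtual view $v_j$ is the same; since the scene is a single flat object on a flat background and projection preserves the order of the boundary length up to constants absorbed into $K_j$, I would argue this does not change the asymptotic order. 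Once each view's contribution matches the single-view template and the shared texture term is scaled by $q+1$, the final expression follows by additivity of the distortion \eqref{equ:D}, and I would then substitute the texture model $\mu\sigma^2 2^{-\alpha R_t}$ and the maximum-shift formula from \cite{Kim10} exactly as in the previous proof.
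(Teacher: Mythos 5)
Your proposal follows essentially the same route as the paper's own proof: the texture term's factor of $2$ becomes $q+1$, the depth-shift and quantization terms are replaced by per-view sums indexed by a scale threshold $s_j$ for each virtual view satisfying $s_j\le b$ and $s_j\ge\log\Delta_j^{-1}-1$, yielding $D=O((q+1)2^{-b}+\sum_{j=0}^{q-1}\Delta_j)$, and the final expression is obtained by the same substitutions as in Theorem~\ref{thm:onereference-onevirtual}. This matches the paper's argument, including the bookkeeping details you flag as potential obstacles.
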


\begin{proof}
With $q$ virtual cameras the three sources of distortion in the proof of Theorem \ref{thm:onereference-onevirtual} turn into
\begin{equation}
D_1= c_1 (q+1) 2^{-b},
\end{equation}
\begin{equation}
D_2=2\times 3\ell d\Omega \sum_{j=0}^{q-1}\sum_{s=s_j+1}^{b+1} 2^s (2^{-s})^2=c_1 (\sum_{j=0}^{q-1}2^{-s_j}-q2^{-b-1})
\end{equation}
and
\begin{equation}
D_3= c_1 (2^{-b}+2^{-2b}\sum_{j=0}^{q-1}2^{s_j}).
\end{equation}
We have $s_j\leq b$ and $s_j \geq \log \Delta_j^{-1}-1$ for $j=0\dots q-1$, thus
\[D=O((q+1)2^b+\sum_{j=0}^{q-1} \Delta_j).\]

Here, we have simply used the fact that the error in the virtual views augments with the number of such views. The DR function is then obtained by following exactly the same replacements as in the proof of Theorem \ref{thm:onereference-onevirtual}.
\end{proof}

Finally we extend the analysis to configurations with more reference views. We assume that we have equally spaced reference cameras and virtual views, and that the number of intermediate views is uniform between every two reference cameras. A weighted interpolation strategy using the two closest reference views is employed for synthesis at each virtual view point. The weights are related to the distances between corresponding virtual view and right and left reference views similarly to \cite{Liu09}. Theorem \ref{thm:multiplereference-multiplevirtual} provides the general DR function in a general camera configuration with $p$ reference views and $(p-1)q$ virtual views.\\

\begin{theorem}\label{thm:multiplereference-multiplevirtual}
The coding scheme that uses wavelet-based texture coder and uniform quantization depth coder, ach-ieves the following DR function on scene configuration $\mathcal{H}^1(\mathbf{\Omega})$ and camera setup $\mathcal{B}^p_q(\mathbf{\mathcal{P}})$
\begin{equation}
\begin{split}
D(&R_{t_0},\dots, R_{t_{p-1}},R_{d_0},\dots,R_{d_{p-1}}) \sim \\
& O(\sum_{i=0}^{p-1} \mu \sigma_i^2 2^{\alpha R_{t_i}} +\\
& \sum_{j=0}^{(p-1)q} (\frac{d_{j,r}}{d})^2 [\mu \sigma_l^2 2^{\alpha R_{t_l}}+K_{j,l}\frac{\Delta Z}{Z_{min}[2^{\beta R_{d_l}}+\Delta Z]}] +\\
& (\frac{d_{j,l}}{d})^2 [\mu \sigma_r^2 2^{\alpha R_{t_r}}+K_{j,r}\frac{\Delta Z}{Z_{min}[2^{\beta R_{d_r}}+\Delta Z]}]), \nonumber
\end{split}
\end{equation}
where $R_{t_i}$ and $R_{d_i}$ are the texture and depth rates for the $i$th reference view, $\Delta Z = Z_{max}-Z_{min}$, $Z_{max}$ and $Z_{min}$ are the maximum and minimum depth values in the scene, $\sigma_i^2$ is variance of the $i$th reference view and $\mu$, $\alpha$ and $\beta$ are positive constants. Also, $d$ indicates the distance between each two reference cameras and $d_{j,l}$ and $d_{j,r}$ are the distances between $j$th virtual view and its left and right reference cameras. Similarly, we have $K_{j,l}=A'_j R'_j |T_l-T'_j|$ and $K_{j,r}=A'_j R'_j |T_r-T'_j|$ that depend of camera parameters.
\end{theorem}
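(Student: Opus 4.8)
The plan is to build directly on the two preceding theorems and to add only one new mechanism, namely the weighted interpolation from two reference views that defines each synthesized view. The configuration $\mathcal{B}^p_q(\mathbf{\mathcal{P}})$ contains $p$ equally spaced reference cameras and $(p-1)q$ intermediate views, each of which lies between an identifiable left reference $l$ and right reference $r$ with $d_{j,l}+d_{j,r}=d$. I would first isolate the $p$ reference views themselves: each of them is coded and decoded exactly as in Theorem~\ref{thm:onereference-onevirtual}, so the wavelet coefficient-skipping and quantization errors (the $D_1$ and $D_3$ contributions) give a term $\mu\sigma_i^2 2^{-\alpha R_{t_i}}$ per reference, and summing over $i=0,\dots,p-1$ produces the first sum $\sum_{i=0}^{p-1}\mu\sigma_i^2 2^{-\alpha R_{t_i}}$ of the claimed DR function.

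Then I would treat the virtual views. For a fixed virtual view $j$ sitting between references $l$ and $r$, the rendering scheme $\Upsilon$ reconstructs it as a weighted interpolation $\hat{v}_j = w_l\,\Pi_l(\hat{t}_l,\hat{d}_l)+w_r\,\Pi_r(\hat{t}_r,\hat{d}_r)$, where $\Pi_l,\Pi_r$ denote projection from the left and right decoded references and, following \cite{Liu09}, the weights are inverse-distance, i.e. $w_l=d_{j,r}/d$ and $w_r=d_{j,l}/d$ with $w_l+w_r=1$. By linearity of the projection, the error $v_j-\hat{v}_j$ splits into a left part and a right part. Each part carries exactly the two error sources already analysed for a single reference projected onto an intermediate position: a texture term inherited from the corresponding reference, $\mu\sigma_l^2 2^{-\alpha R_{t_l}}$ or $\mu\sigma_r^2 2^{-\alpha R_{t_r}}$, and a depth-shift term obtained from the maximum-shift formula of \cite{Kim10} evaluated at the baseline between that reference and view $j$, which yields the constants $K_{j,l}=A'_jR'_j|T_l-T'_j|$ and $K_{j,r}=A'_jR'_j|T_r-T'_j|$. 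Because the distortion in \eqref{equ:D} is an $\ell_2$ quantity, the interpolation weights enter squared, so the left and right contributions are scaled by $(d_{j,r}/d)^2$ and $(d_{j,l}/d)^2$ respectively; summing over all $(p-1)q$ virtual views gives the remaining two sums in the statement.

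The main obstacle, and the step I would spend the most care on, is justifying the clean separation of the left and right contributions to $v_j-\hat{v}_j$ and the appearance of the \emph{squared} weights. This requires arguing that the cross term produced by expanding $\|w_l\,e_l+w_r\,e_r\|_2$ --- where $e_l,e_r$ are the left and right projection errors --- can be absorbed into the order-constants of the bound rather than left explicit. I would handle this either by a Cauchy--Schwarz majorisation, which replaces the cross term by a multiple of the two squared terms and only changes the constant $c_1$, or by noting that the left and right projection errors are supported on disjoint or nearly disjoint boundary neighbourhoods of $\Omega$ as projected into view $j$, so that $\langle e_l,e_r\rangle$ is negligible. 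A secondary point to verify is that the per-view wavelet analysis of Theorem~\ref{thm:onereference-onevirtual} applies verbatim to each projected reference, i.e. that the boundary length $d\Omega$ and the support relations $\ell 2^{-s_j-1}<\Delta_j<\ell 2^{-s_j}$ carry over unchanged to an arbitrary intermediate camera; once that is granted, the substitutions $2^{-b}\mapsto \mu\sigma^2 2^{-\alpha R_t}$ and $\Delta_j\mapsto K\,\Delta Z/\big(Z_{min}[2^{\beta R_d}+\Delta Z]\big)$ are identical to those in the previous proofs and the result follows.
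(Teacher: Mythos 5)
Your proposal follows essentially the same route as the paper: the reference-view terms are inherited from Theorems~\ref{thm:onereference-onevirtual} and \ref{thm:onereference-multiplevirtual}, each virtual view's error is split into left and right projection contributions weighted by $d_{j,r}/d$ and $d_{j,l}/d$, and the squared weights arise from the $\ell_2$ distortion before summing over all reference pairs. In fact you are more careful than the paper, which simply asserts the squared-weight form without addressing the cross term $\langle e_l,e_r\rangle$ that your Cauchy--Schwarz (or disjoint-support) argument absorbs into the order constants.
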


\begin{proof}
First, using Theorem \ref{thm:onereference-multiplevirtual}, we can write the distortion of a reference view, $r$, and the $q$ virtual views on its left as
\begin{equation}
\begin{split}
D(R_{t_r},R_{d_r}) &= O(\mu \sigma_r^2 2^{\alpha R_{t_r}}\\
&+ \sum_{j=0}^{q-1} [\mu \sigma_r^2 2^{\alpha R_{t_r}} + K_{j,r} \frac{\Delta Z}{Z_{min}[2^{\beta R_{d_r}}+\Delta Z]}])
\end{split}
\end{equation}
Clearly, the first and second terms define the distortion at reference and virtual views, respectively. By adding another reference view, $l$, and using a weighted average of the two closest reference views for synthesizing virtual views we have

\begin{equation}\label{equ:D_thmIII}
\begin{split}
D(R_{t_r}&,R_{t_l},R_{d_r},R_{d_l}) = \\
&O(\mu \sigma_r^2 2^{\alpha R_{t_r}}+\mu \sigma_l^2 2^{\alpha R_{t_l}}+\\
&\sum_{j=0}^{q-1} (\frac{d_{j,r}}{d})^2 [\mu \sigma_l^2 2^{\alpha R_{t_l}}+K_{j,l}\frac{\Delta Z}{Z_{min}[2^{\beta R_{d_l}}+\Delta Z]}] +\\
& (\frac{d_{j,l}}{d})^2 [\mu \sigma_r^2 2^{\alpha R_{t_r}}+K_{j,r}\frac{\Delta Z}{Z_{min}[2^{\beta R_{d_r}}+\Delta Z]}])
\end{split}
\end{equation}
where $d$ indicates the distance between the two reference cameras and $d_{j,l}$ and $d_{j,r}$ are the distances between $j$th virtual view and its left and right reference cameras. Here, our weights are simply related to the distance between virtual view and its neighbor reference views. Finally, summing up the terms of \eqref{equ:D_thmIII} for all reference views, leads to the distortion in Theorem \ref{thm:multiplereference-multiplevirtual}.

\end{proof}

The above rate-distortion analysis is performed on $\mathcal{H}^1(\mathbf{\Omega})$. However, the extension to multiple foreground objects is straightforward and only adds constant factors to the RD function. 

\section{RD Optimization}\label{sec:implementation}
In this section we show how the analysis in Section \ref{sec:proofs} can be used for optimizing the rate allocation in multiview video coding. Using Theorem \ref{thm:multiplereference-multiplevirtual}, the rate allocation problem turns into the following convex nonlinear multivariable optimization problem with linear contraints
\begin{equation}\label{equ: RDoptimization}
\begin{split}
arg \min_{\overrightarrow{R}_t,\overrightarrow{R}_d} & g_t(\overrightarrow{R}_t)+g_d(\overrightarrow{R}_d)\\
\text{such that } & \|\overrightarrow{R}_t+\overrightarrow{R}_d\|_1 \leq R
\end{split}
\end{equation}
where
\[g_t(\overrightarrow{R}_t)=\sum_{i=0}^{p-1} (q+1)\mu \sigma_i^2 2^{\alpha R_{t_i}},\]
\begin{equation}
\begin{split}
g_d(\overrightarrow{R}_d)=\sum_{j=0}^{(p-1)q} &[(\frac{d_{j,l}}{d})^2 K_{j,r}\frac{\Delta Z}{Z_{min}[2^{\beta R_{d_r}}+\Delta Z]} + \\ &(\frac{d_{j,r}}{d})^2 K_{j,l}\frac{\Delta Z}{Z_{min}[2^{\beta R_{d_l}}+\Delta Z]}]\nonumber
\end{split}
\end{equation}
and $R$ is the total target bit rate. The convexity proof is straightforward since the above optimization problem is sum of terms in the form $a2^{-bx}$, which are convex. Therefore it can be solved efficiently using classical convex optimization tools. Note that the above optimization problem is for the general camera configuration $\mathcal{B}^p_q(\mathbf{\mathcal{P}})$. The rate allocation for simpler configurations is straightforward by replacing the objective functions with terms from Theorem \ref{thm:onereference-onevirtual} and \ref{thm:onereference-multiplevirtual}. We can finally note that the rate allocation strategy is only based on encoder side data.

The last issue that we have to address is adjustment of the model parameter. There are three parameters, $\mu$, $\alpha$ and $\beta$ in \eqref{equ: RDoptimization} that we estimate using the following offline method. Using the first texture and depth images, we estimate the model parameters by solving the following regression
\begin{equation}\label{equ:parameter_optimization}
\begin{split}
\min_{\mu,\alpha,\beta} \sum_{k=0}^{n-1} |D(R_k)-D^*(R_k)|
\end{split}
\end{equation}
where $n$ is the number of points in the regression and is further discussed in the next section and $D(R_k)$ is the distortion obtained by our rate allocation strategy of Eq. \eqref{equ: RDoptimization} with target bit rate $R_k$ and $D^*(R_k)$ is the best possible allocation obtained by a full search method at the same bit rate.

\section{Experimental Results} \label{sec:simulations}
In the previous sections, we have studied the bit allocation problem on simple scenes and extracted a model for estimating RD function of a DIBR multiview coder with wavelet-based texture coding and a quantization-based depth coding. This section studies the RD behavior and the accuracy of proposed model on real scenes where JPEG2000 is used for coding depth and reference images.

We use the \emph{Ballet} and \emph{Breakdancers} datasets from Interactive Visual Group of Microsoft Research \cite{microsoft}. In our simulations gray-scale versions of these datasets are used. These datasets contain 100 frames and all the numerical results in this section are the average on the three frames from beginning, middle and end of these sequences, i.e., frames with temporal indices 0, 49 and 99. The camera intrinsic and extrinsic parameters, $\mathcal{P}$, and the scene parameters, $Z_{min}$ and $Z_{max}$, are set to the values given by datasets. In cases where parameters are changed to study the model under some special aspects, we mention the parameter values explicitly.

In an offline stage using Eq. \eqref{equ:parameter_optimization} we adjust $\mu$, $\alpha$ and $\beta$ parameters in Eq. \eqref{equ: RDoptimization} at four bit rates, i.e., $n=4$, for each dataset. The parameter values are set to $(0.9,20.5,8.5)$ for \emph{Ballet} and $(0.9,30.0,2.7)$ for \emph{Breakdancers}. These values are fixed all over this section for the different camera configurations.

In the following sections we study the RD model of Eq. \eqref{equ: RDoptimization} for rate allocation in different camera configurations, $\mathcal{B}^1_1(\mathbf{\mathcal{P}})$, $\mathcal{B}^1_6(\mathbf{\mathcal{P}})$ and $\mathcal{B}^2_3(\mathbf{\mathcal{P}})$. As a comparison criterion we use the optimal allocation that is obtained by rendering all the intermediate views and searching the whole distortion-rate space for the allocation with minimal distortion.

As we want to keep our model independent of any special strategy for filling occluded regions, all occluded regions are ignored in distortion and PSNR calculations.

\subsection{$\mathcal{B}^1_1(\mathbf{\mathcal{P}})$ configuration}\label{ssec:OneRef-OneVir}
We start with $\mathcal{B}^1_1(\mathbf{\mathcal{P}})$ camera setup, a simple configuration with one reference view and only one virtual view. As reference and target cameras, we use the cameras 0 and 1 of the datasets, respectively. Thus, all camera-related parameters in Eq. \eqref{equ: RDoptimization} are set accordingly.

A DR surface is first generated offline for the desire bit rate range to generate the distortion benchmark values. In our study, $R_t$ and $R_d$ are set between 0.02 and 0.5 bpp with 0.02 bpp steps. It means that $R_t$ and $R_d$ axes are discretized into 25 values. Since the images are gray and we are coding only one reference view and one depth map, this range of bit rate is pretty reasonable. The DR surface is generated by actually coding the texture and depth images at each $(R_t,R_d)$ pair and by calculating distortion after decoding and synthesis.

Then, for each target bit rate, $R$, the optimal rate allocation is calculated by cutting the above surface with a plane $R_t+R_d=R$ and minimizing the distortion. If the minimum point occurs between grid points (because we have a discretized surface) bicubic interpolation is used to estimate the optimal allocation. Here, $R$ is set between 0.1 to 0.5 bpp with 0.01 bpp step. Figure \ref{fig:PSNR-OneRef-OneVir} provides distortion curves of compression performance of DIBR coder in terms of PSNR for \emph{Ballet} and \emph{Breakdancers} datasets. The estimated curve is generated by solving the optimization problem provided in Eq. \eqref{equ: RDoptimization} with the proposed RD model. The final PSNR results are averaged over frames 0, 49 and 99 of these datasets. The average differences between the model-based and optimal curves are 0.05 dB and 0.06 dB for \emph{Ballet} and \emph{Breakdancers} sets, respectively. Also, the maximum loss in PSNR in our model-based rate allocation is 0.11 and 0.13 dB, respectively.

\begin{figure*}[tb]

\begin{minipage}[b]{.49\linewidth}
\centering
\rightline{\epsfig{figure=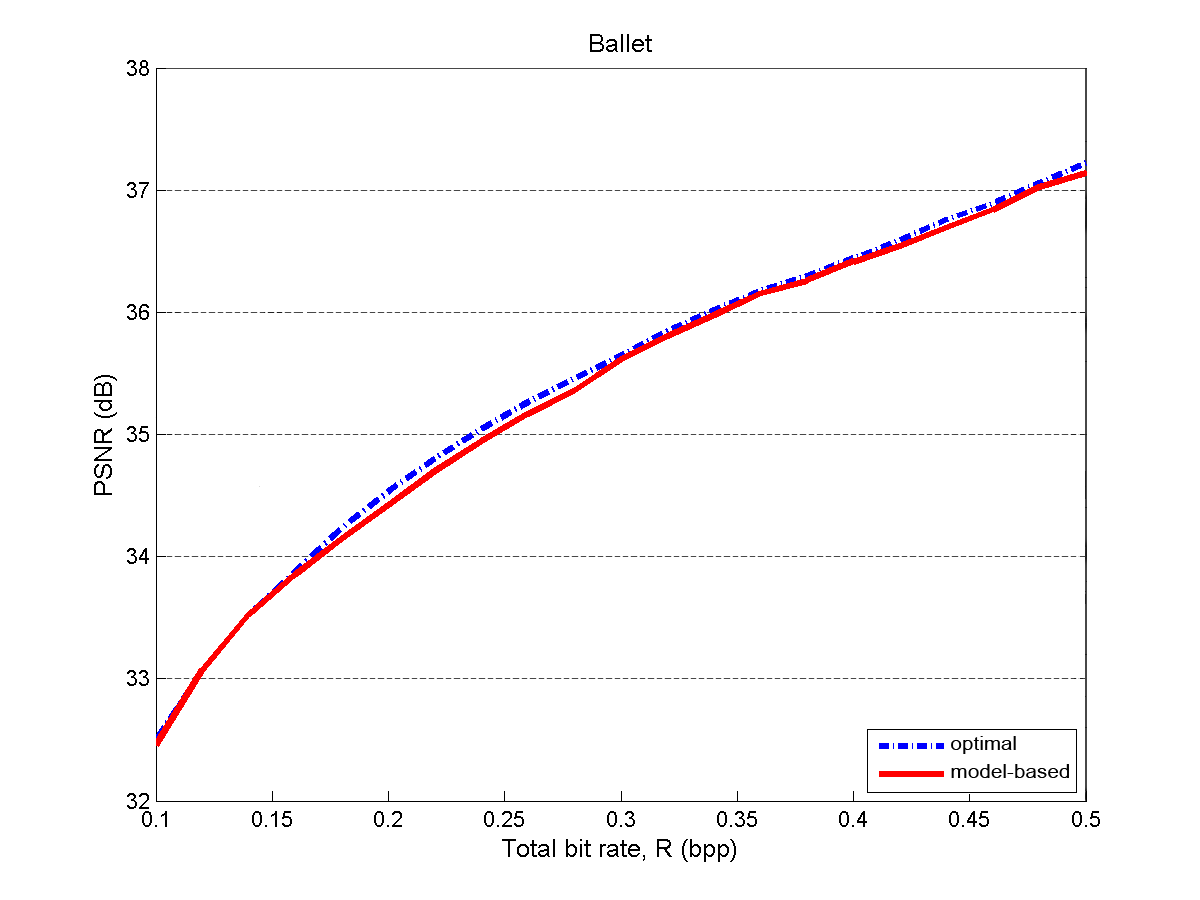,width=7cm}}
\end{minipage}
\hfill
\begin{minipage}[b]{.49\linewidth}
\centering
\leftline{\epsfig{figure=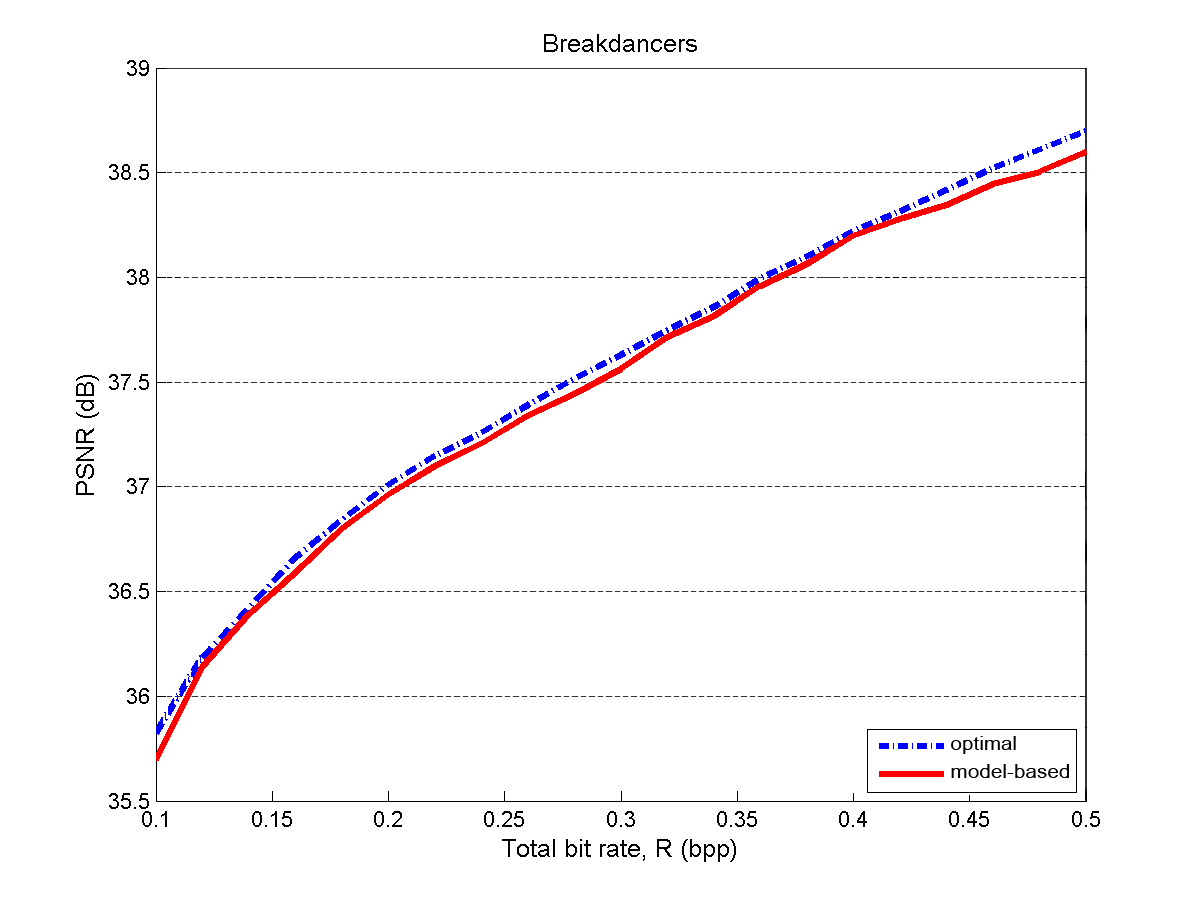,width=7cm}}
\end{minipage}
\caption{Comparison of coding performance for $\mathcal{B}^1_1(\mathbf{\mathcal{P}})$ using the proposed allocation method and the best allocation in terms of PSNR at rates ranging from 0.1 to 0.5 bpp; \emph{Ballet} (left) and \emph{Breakdancers} (right).}
\label{fig:PSNR-OneRef-OneVir}
\end{figure*}

\begin{table*}[position specifier]
\caption{Rate Allocation Results for $\mathcal{B}^1_1(\mathbf{\mathcal{P}})$ - Comparison between allocation with the proposed model and the optimal allocation, in terms of $R_t$ percentage of the total rate.}
\centering
\begin{tabular}{|cc|c|c|c|c|}
\hline
\multicolumn{2}{|c|}{Total bitrate} & 0.2 bpp & 0.3 bpp & 0.4 bpp & 0.5 bpp \\ \cline{1-6}
\multicolumn{1}{|c|}{\multirow{2}{*}{\emph{Ballet}}} &
\multicolumn{1}{|c|}{optimal} & 67.83\% & 57.78\% & 53.33\% & 37.33\%      \\ \cline{2-6}
\multicolumn{1}{|c|}{}                        &
\multicolumn{1}{|c|}{model-based} & 54.61\% & 49.61\% & 48.46\% & 48.36\%      \\ \hline\hline
\multicolumn{1}{|c|}{\multirow{2}{*}{\emph{Breakdancers}}} &
\multicolumn{1}{|c|}{optimal} & 80.91\% & 75.56\% & 70.32\% & 73.33\% \\ \cline{2-6}
\multicolumn{1}{|c|}{}                        &
\multicolumn{1}{|c|}{model-based} & 75.72\% & 74.54\% & 75.09\% & 75.78\% \\ \cline{1-6}
\end{tabular}
\label{tab:RT-percentage-Oneref-Onevir}
\end{table*}

Table \ref{tab:RT-percentage-Oneref-Onevir} shows the percentage of the total rate that is used for coding texture for different target bit rates. Clearly our model-based allocation follows closely the best allocation. Figure \ref{fig:RT-Percent-OneRef-OneVir} further shows the best and model-based allocations versus bit rate in terms of $R_t$ percentage. Additionally, two dotted curves are presented which are the higher and lower bounds on $R_t$ allocation where the PSNR loss compared to the best allocation remains below 0.2 dB.

We study now the performance of a priori given rate allocations, which are commonly adopted in practice. We consider several such allocations, where the values of $R_t$ relative to the total budget spans a range of 20 to 80 \%. Table \ref{tab:fixed-allocation-Oneref-Onevir} shows the average PSNR loss compared to the best allocation in these cases. All these results are the average over frames 0, 49 and 99 in both datasets. We compare the performance of the rate allocation estimated with our RD model and we show that our allocation is always better. Figure \ref{fig:RT-Percent-OneRef-OneVir} further shows that using a model-based allocation instead of a priori allocation is more important at low bit rates or in images with close to camera objects (like \emph{Ballet}). Depending on the dataset, the best a priori allocation occurs at different $R_t$ percentages. In our proposed allocation, the results are close to optimal in both datasets as the model adopts to the scene content. The last two rows of Table \ref{tab:fixed-allocation-Oneref-Onevir} shows the average benefit of our model compared to a fixed rate allocation.

\begin{figure*}[tb]

\begin{minipage}[b]{.49\linewidth}
\centering
\rightline{\epsfig{figure=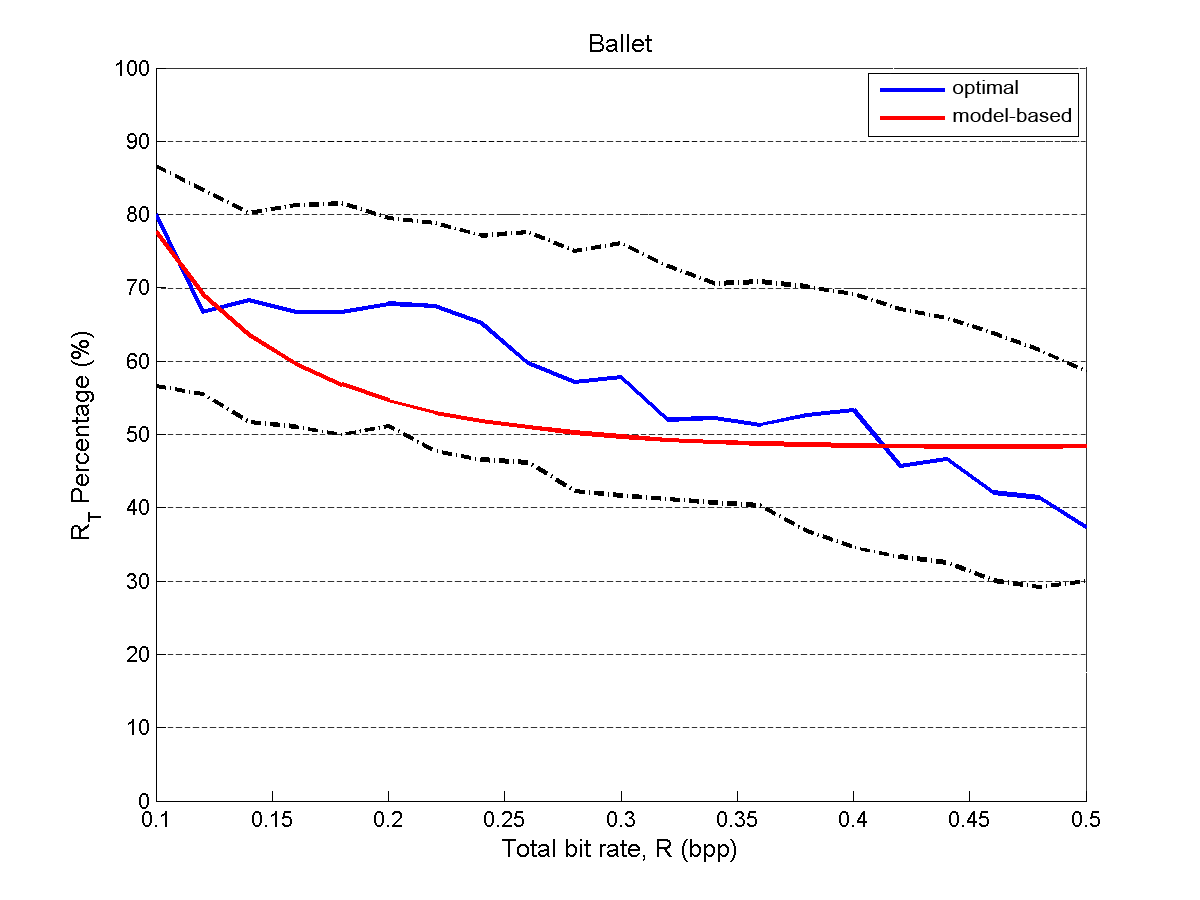,width=7cm}}
\end{minipage}
\hfill
\begin{minipage}[b]{.49\linewidth}
\centering
\leftline{\epsfig{figure=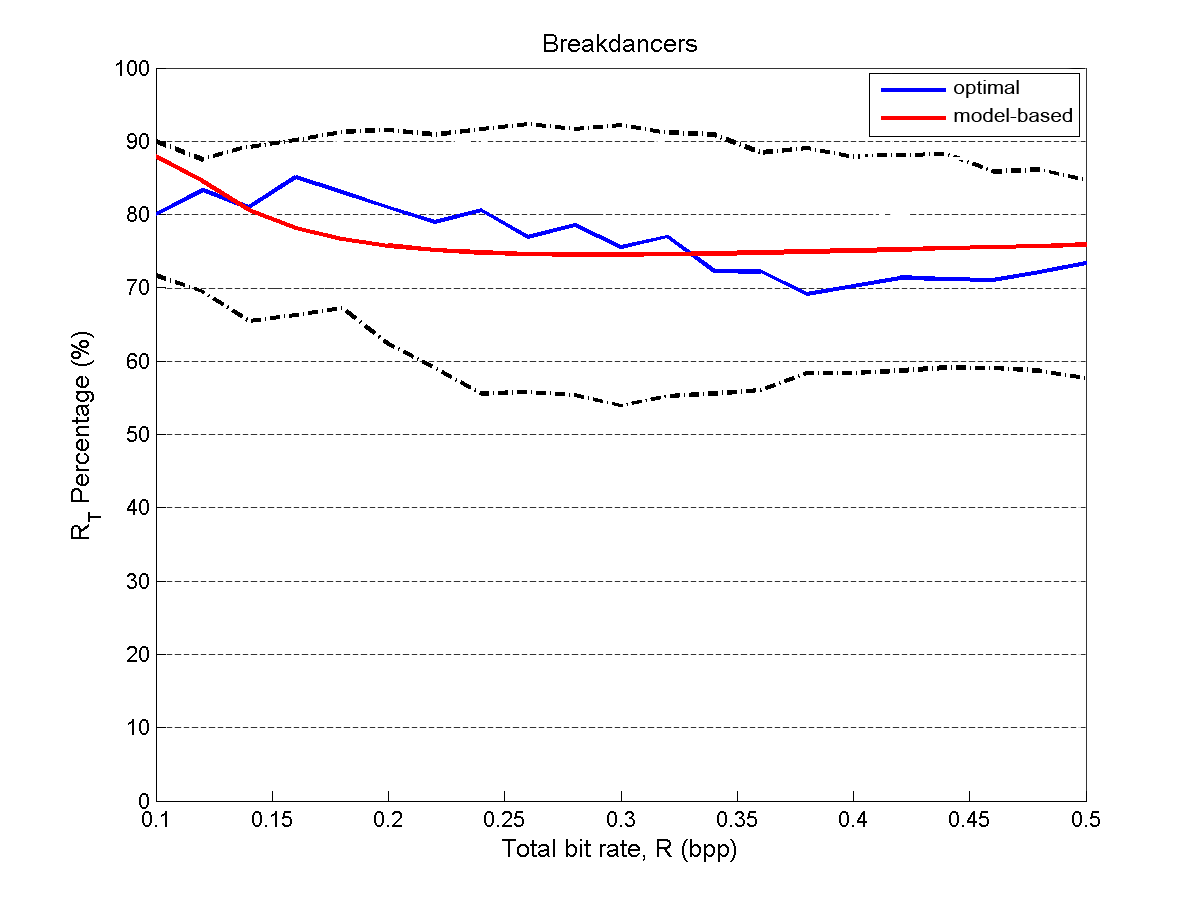,width=7cm}}
\end{minipage}
\caption{Rate allocation results of $\mathcal{B}^1_1(\mathbf{\mathcal{P}})$ using our proposed method and the optimal allocation in terms of $R_t$ percentage of total rates ranging from 0.1 to 0.5 bpp; \emph{Ballet} (left) and \emph{Breakdancers} (right). The black dashed curves show the bounds within which the difference in PSNR quality with optimal allocation remains less than or equal to 0.2 dB. }
\label{fig:RT-Percent-OneRef-OneVir}
\end{figure*}

\begin{table*}[position specifier]
\caption{Performance penalty for fixed allocation in $\mathcal{B}^1_1(\mathbf{\mathcal{P}})$ - Comparison between the proposed model and a priori allocation policies in terms of average and maximum differences to the best achievable PSNR at total rates ranging from 0.1 to 0.5. The column headers indicate the a priori allocation of $R_t$ relatively to the total rate.}
\centering
\begin{tabular}{|cc|c|c|c|c|c|c|c|c|c|}
\hline
\multicolumn{2}{|c|}{$R_t$ percentage} & 20\% & 30\% & 40\% & 50\% & 60\% & 70\% & 80\% & our model\\ \cline{1-10}
\multicolumn{1}{|c|}{\multirow{2}{*}{\emph{Ballet}}} &
\multicolumn{1}{|c|}{Average (dB)} & 1.43 & 0.66 & 0.29 & 0.11 & 0.07 & 0.14 & 0.35 & \textbf{0.05}\\ \cline{2-10}
\multicolumn{1}{|c|}{}                        &
\multicolumn{1}{|c|}{Maximum (dB)} & 2.20 & 1.15 & 0.63 & 0.31 & 0.21 & 0.39 & 0.77 & 0.11\\ \hline\hline
\multicolumn{1}{|c|}{\multirow{2}{*}{\emph{Breakdancers}}} &
\multicolumn{1}{|c|}{Average (dB)} & 1.97 & 1.14 & 0.68 & 0.40 & 0.21 & 0.10 & \textbf{0.06} & \textbf{0.06} \\ \cline{2-10}
\multicolumn{1}{|c|}{}                        &
\multicolumn{1}{|c|}{Maximum (dB)} & 3.23 & 2.08 & 1.33 & 0.77 & 0.44 & 0.16 & 0.11 & 0.13 \\ \hline\hline
\multicolumn{1}{|c|}{\multirow{2}{*}{\emph{Overall}}} &
\multicolumn{1}{|c|}{Average (dB)} & 1.70 & 0.90 & 0.49 & 0.26 & 0.14 & 0.12 & 0.21 & \textbf{0.06} \\ \cline{2-10}
\multicolumn{1}{|c|}{}                        &
\multicolumn{1}{|c|}{Maximum (dB)} & 2.72 & 1.62 & 0.98 & 0.54 & 0.33 & 0.28 & 0.49 & 0.12 \\ \cline{1-10}
\end{tabular}
\label{tab:fixed-allocation-Oneref-Onevir}
\end{table*}

Finally we study the effect of the distance of virtual views on the rate allocation. We vary the distance between reference and virtual view from 1 to 20 cm by only changing the value of the $x$ coordinate in the $T'$ translation vector of the virtual camera. We further fix the total bit rate to $R=0.24$ bpp. Figure \ref{fig:RT-Percent-Distance} shows the best rate allocation as a function of the distance of the virtual view. Again, these results are the average over frame 0, 49 and 99 of \emph{Ballet} and \emph{Breakdancers} datasets. Intuitively, for a given error in depth maps due to coding effects, rendering distortion should be smaller in closer virtual views than farther ones. It means that for rendering far views we need more accurate depth information for rendering far views. Alternatively, texture coding distortion plays a more important role in closer views. This is shown in Figure \ref{fig:RT-Percent-Distance} as the $R_t$ percentage decreases by increasing the distance of the virtual view. For \emph{Ballet} dataset we however observe an increase in $R_t$ after 12 cm. It is due to the nature of this scene and to the fact that we use only one camera for rendering virtual views. In this sample there are two foreground objects which are close to the camera, and, beyond a given distance, they move out of view boundaries and mostly background pixels remain. Clearly depth coding errors is less important for background regions that are far from the camera. We also show in Figure \ref{fig:RT-Percent-Distance} the model-based allocation using our RD equation in Eq. \eqref{equ: RDoptimization} where we only change $T'$. Therefore, the second of the distortion grows with the distance which means that increasing $R_d$ yields smaller distortion comparing to increasing $R_t$. The average PSNR penalty of our model-based allocation is 0.05 dB and 0.03 dB for \emph{Ballet} and \emph{Breakdancers}, respectively.

\begin{figure*}[tb]

\begin{minipage}[b]{.49\linewidth}
\centering
\rightline{\epsfig{figure=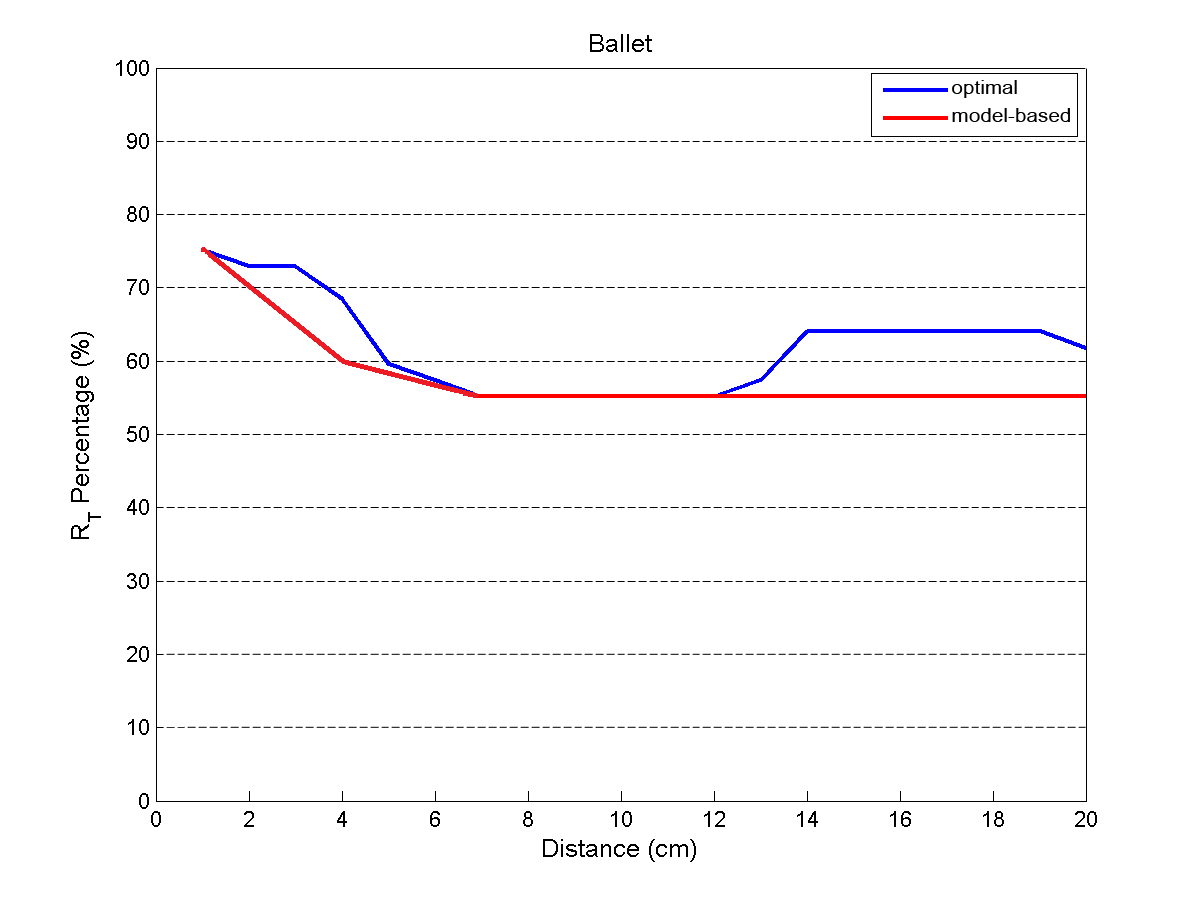,width=7cm}}
\end{minipage}
\hfill
\begin{minipage}[b]{.49\linewidth}
\centering
\leftline{\epsfig{figure=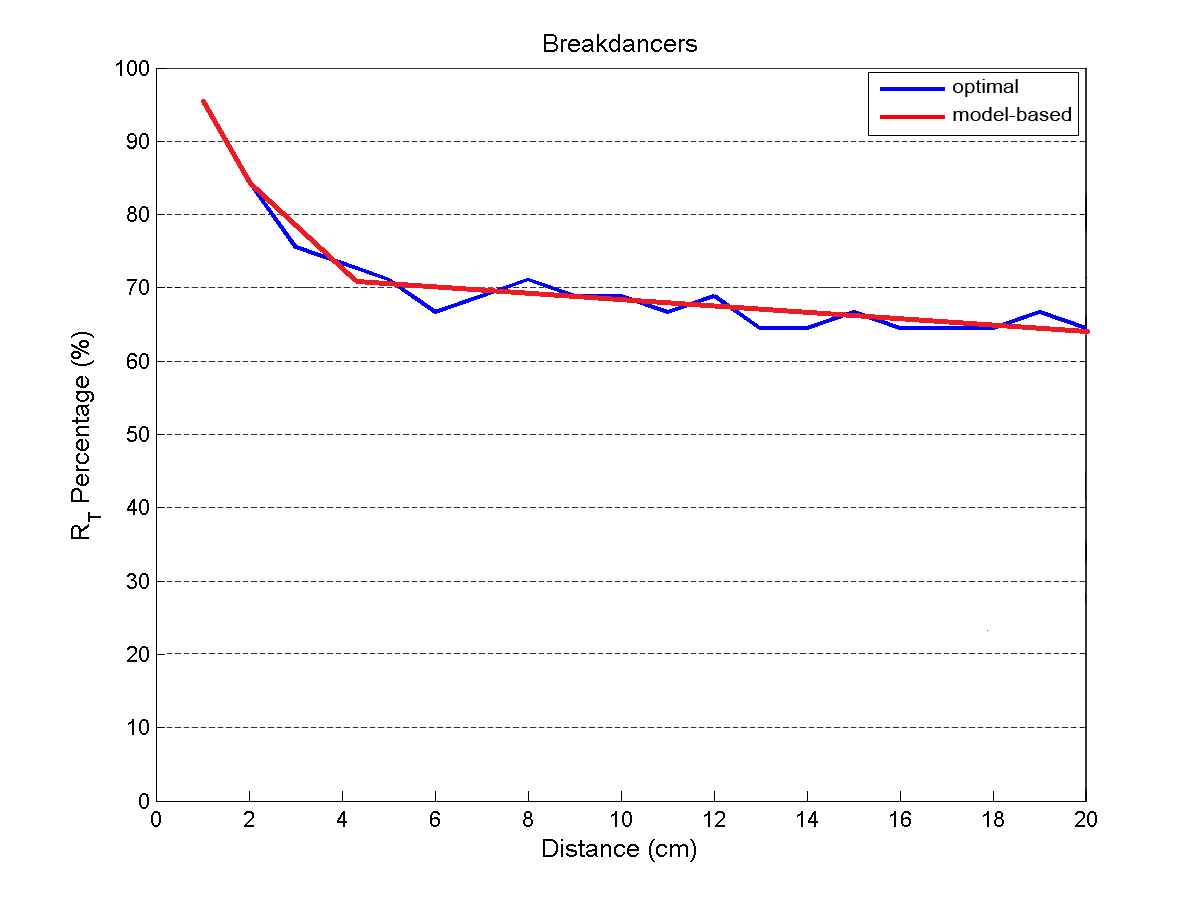,width=7cm}}
\end{minipage}
\caption{Rate allocation results of $\mathcal{B}^1_1(\mathbf{\mathcal{P}})$ using the model-based and the optimal allocation in terms of $R_t$ percentage at a total rate of 0.24 bpp; \emph{Ballet} (left) and \emph{Breakdancers} (right). The virtual view is projected at 1 to 20 centimeters from reference view.}
\label{fig:RT-Percent-Distance}
\end{figure*}

\subsection{$\mathcal{B}^1_q(\mathbf{\mathcal{P}})$ configuration}\label{ssec: OneRef-MultipleVir}
In this section we study the allocation problem for camera configuration with multiple virtual views. The camera 4 of \emph{Ballet} and \emph{Breakdancers} datasets is used as the reference camera and six virtual cameras separated by 1 cm are considered, three at each side of the reference camera. At each side the parameters of the virtual cameras are set according to camera 3 and 5, respectively.

The optimal allocation process is obtained similarly to section \ref{ssec:OneRef-OneVir}. The optimal RD surface is generated offline, for $R_t$ and $R_d$ rates between 0.02 and 0.5 bpp with 0.02 bpp steps. Then, at each bit rate $R$, the best allocation is calculated using interpolation over this RD surface. The model-based allocation is the result of solving Eq. \eqref{equ: RDoptimization} for $\mathcal{B}^1_6(\mathbf{\mathcal{P}})$. The reported distortion is the average distortion over all six virtual views and the reference view and also over the three representative frames in each set, i.e., frames 0, 49 and 99.

\begin{figure*}[tb]

\begin{minipage}[b]{.49\linewidth}
\centering
\rightline{\epsfig{figure=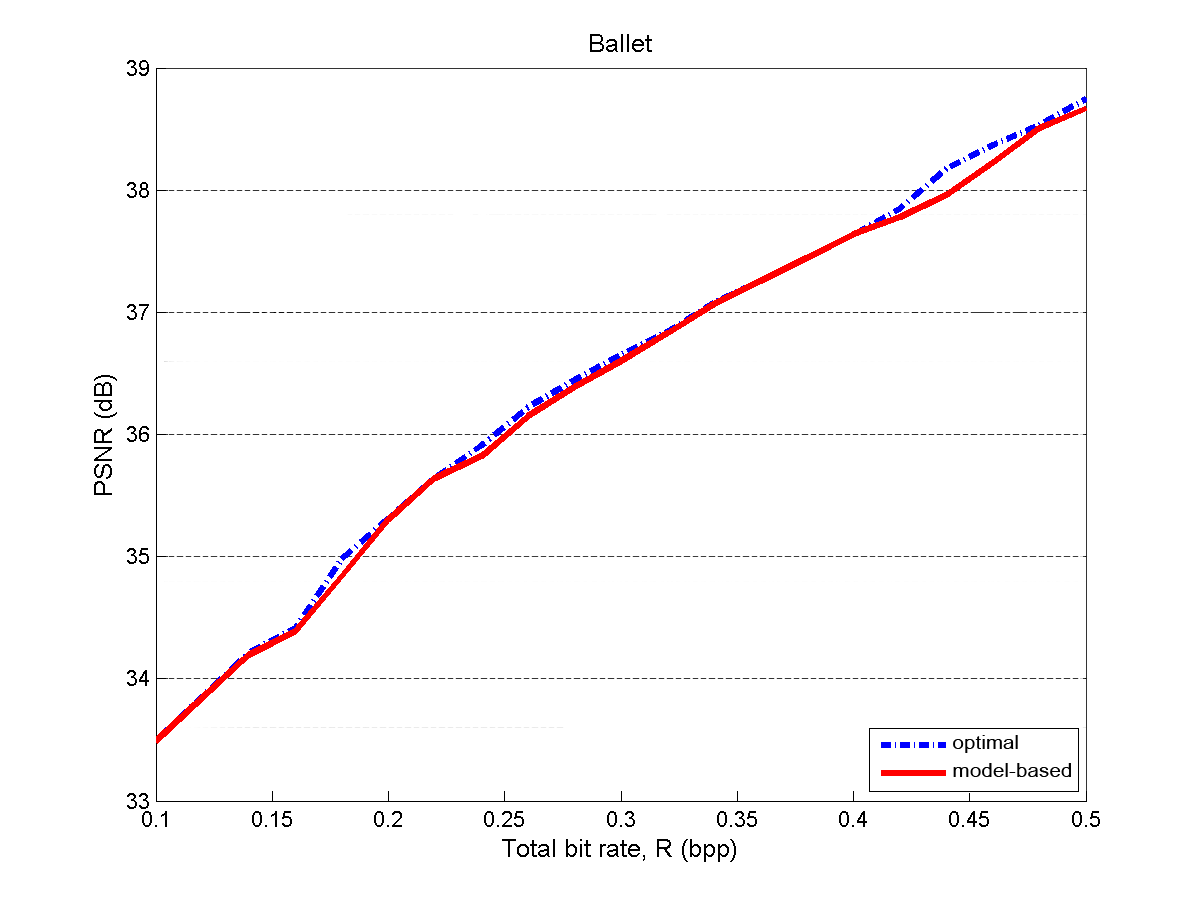,width=7cm}}
\end{minipage}
\hfill
\begin{minipage}[b]{.49\linewidth}
\centering
\leftline{\epsfig{figure=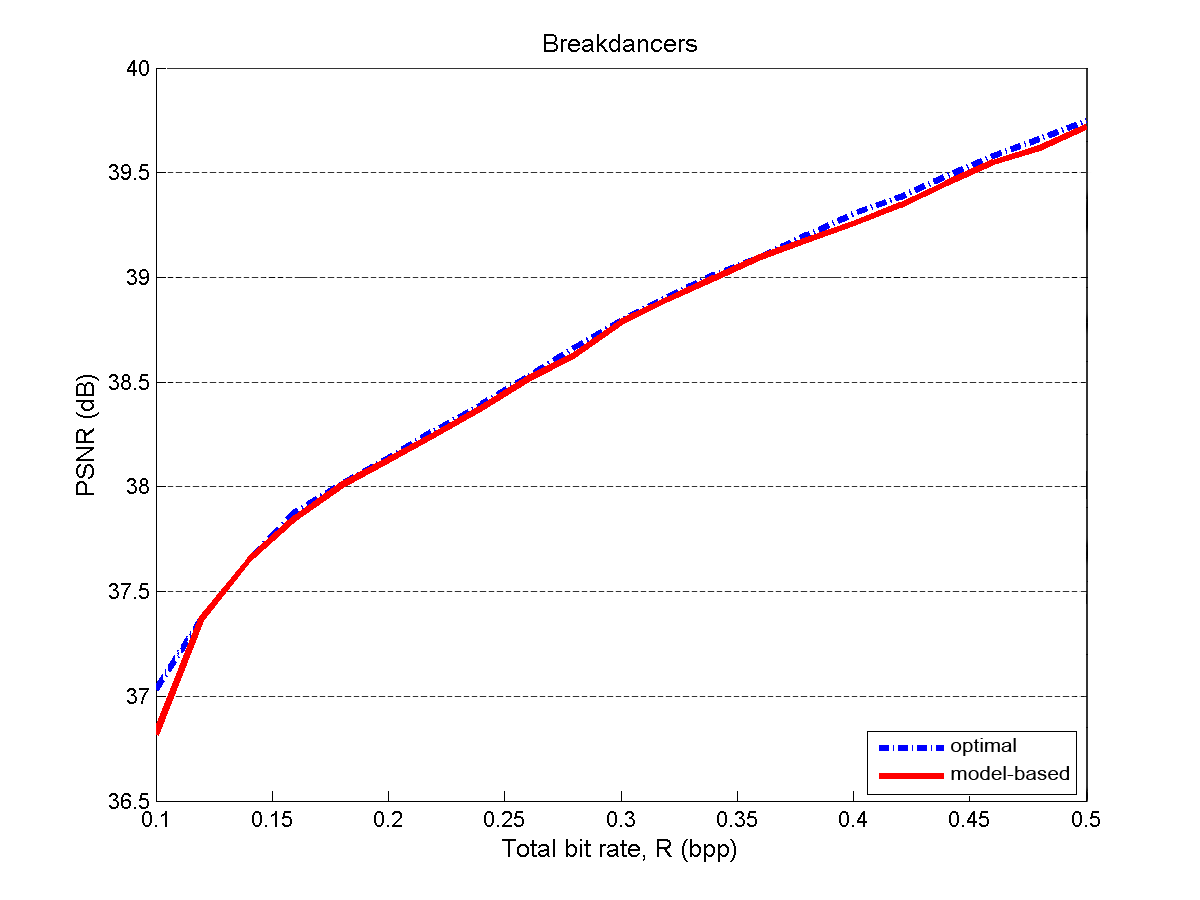,width=7cm}}
\end{minipage}
\caption{Comparison of coding performance for $\mathcal{B}^1_6(\mathbf{\mathcal{P}})$ using the model-based allocation method and the best allocation in terms of PSNR at rates ranging from 0.1 to 0.5 bpp; \emph{Ballet} (left) and \emph{Breakdancers} (right).}
\label{fig:PSNR-OneRef-6Vir}
\end{figure*}

\begin{figure*}[tb]

\begin{minipage}[b]{.49\linewidth}
\centering
\rightline{\epsfig{figure=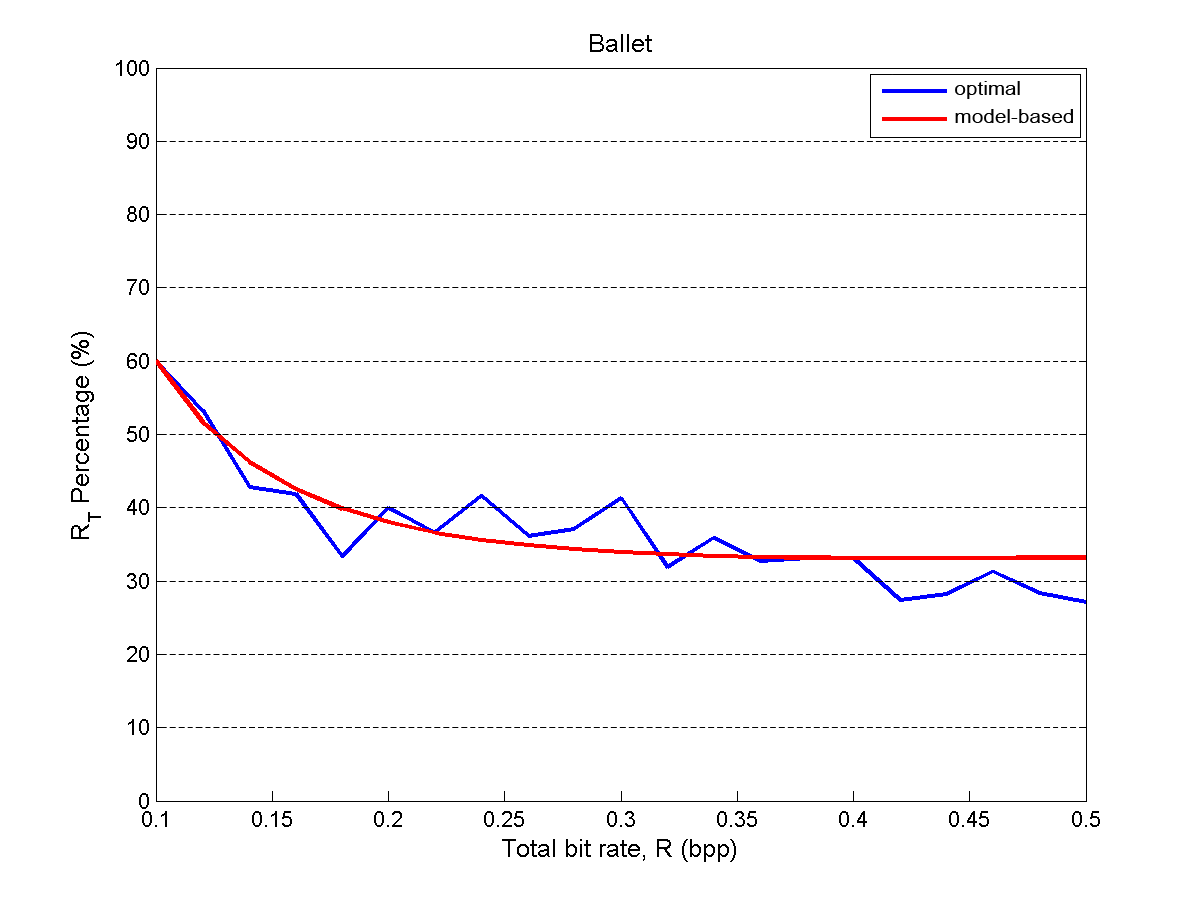,width=7cm}}
\end{minipage}
\hfill
\begin{minipage}[b]{.49\linewidth}
\centering
\leftline{\epsfig{figure=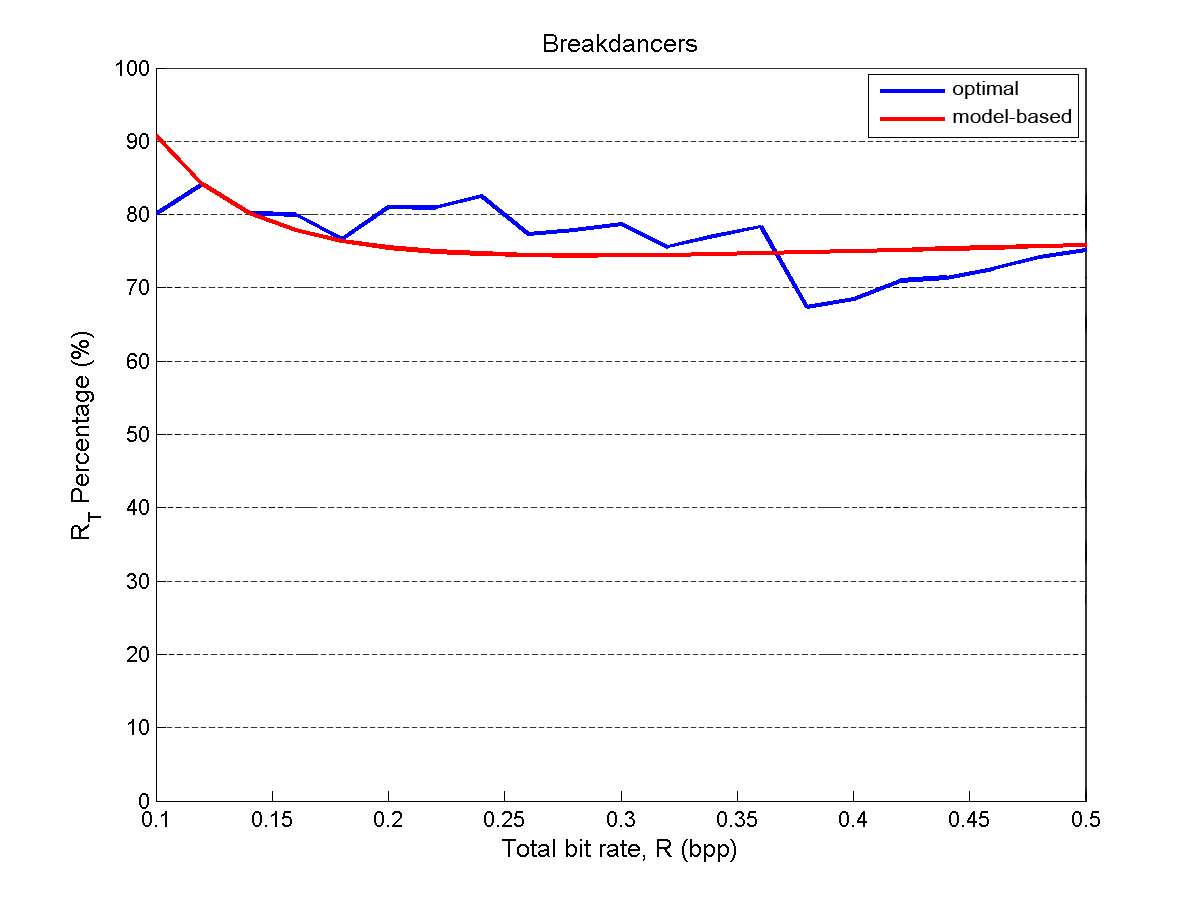,width=7cm}}
\end{minipage}
\caption{Rate allocation results of $\mathcal{B}^1_6(\mathbf{\mathcal{P}})$ using the model-based and the optimal allocation in terms of $R_t$ percentage at total rates ranging from 0.1 to 0.5 bpp; \emph{Ballet} (left) and \emph{Breakdancers} (right).}
\label{fig:RT-Percent-6Ref-OneVir}
\end{figure*}

Figure \ref{fig:PSNR-OneRef-6Vir} represents performance in terms of PSNR with respect to target bit rate, $R$, where $R$ varies between 0.1 and 0.5 bpp. The two curves correspond to the best allocation and the model-based allocation. The amount of loss due using our model is 0.05 and 0.03 dB, on average, for \emph{Ballet} and \emph{Breakdancers}, respectively. Also, the maximum difference is 0.22 and 0.21 dB, respectively. Figure \ref{fig:RT-Percent-6Ref-OneVir} shows the best and model-based allocation in terms of percentage of the total rate allocated to $R_t$, for different values of $R$. Clearly our model again performs very close to the optimal allocation. This yields to clear improvements over a priori rate allocation as given in Table \ref{tab:fixed-allocation-Oneref-Multiplevir} in case of $\mathcal{B}^1_6(\mathbf{\mathcal{P}})$.

\begin{table*}[position specifier]
\caption{Performance penalty for fixed allocation in $\mathcal{B}^1_6(\mathbf{\mathcal{P}})$ - Comparison between the proposed model and a priori allocation policies in terms of average and maximum differences to the best achievable PSNR at total rates ranging from 0.1 to 0.5. The column headers indicate the a priori allocation of $R_t$ relatively to the total rate.}
\centering
\begin{tabular}{|cc|c|c|c|c|c|c|c|c|c|}
\hline
\multicolumn{2}{|c|}{$R_t$ percentage} & 20\% & 30\% & 40\% & 50\% & 60\% & 70\% & 80\% & our model\\ \cline{1-10}
\multicolumn{1}{|c|}{\multirow{2}{*}{\emph{Ballet}}} &
\multicolumn{1}{|c|}{Average (dB)} & 0.54 & \textbf{0.11} & 0.12 & 0.32 & 0.67 & 1.21 & 1.91 & \textbf{0.05}\\ \cline{2-10}
\multicolumn{1}{|c|}{}                        &
\multicolumn{1}{|c|}{Maximum (dB)} & 1.26 & 0.47 & 0.35 & 0.70 & 1.23 & 1.90 & 3.13 & 0.22\\ \hline\hline
\multicolumn{1}{|c|}{\multirow{2}{*}{\emph{Breakdancers}}} &
\multicolumn{1}{|c|}{Average (dB)} & 2.03 & 1.15 & 0.66 & 0.36 & 0.16 & 0.05 & \textbf{0.03} & \textbf{0.03} \\ \cline{2-10}
\multicolumn{1}{|c|}{}                        &
\multicolumn{1}{|c|}{Maximum (dB)} & 3.57 & 2.28 & 1.38 & 0.78 & 0.37 & 0.12 & 0.08 & 0.21 \\ \hline\hline
\multicolumn{1}{|c|}{\multirow{2}{*}{\emph{Overall}}} &
\multicolumn{1}{|c|}{Average (dB)} & 1.29 & 0.63 & 0.39 & \textbf{0.34} & 0.42 & 0.63 & 0.97 & \textbf{0.04} \\ \cline{2-10}
\multicolumn{1}{|c|}{}                        &
\multicolumn{1}{|c|}{Maximum (dB)} & 3.57 & 2.28 & 1.38 & 0.78 & 1.23 & 1.90 & 3.13 & 0.22 \\ \cline{1-10}
\end{tabular}
\label{tab:fixed-allocation-Oneref-Multiplevir}
\end{table*}

\begin{figure*}[tb]

\begin{minipage}[b]{.49\linewidth}
\centering
\rightline{\epsfig{figure=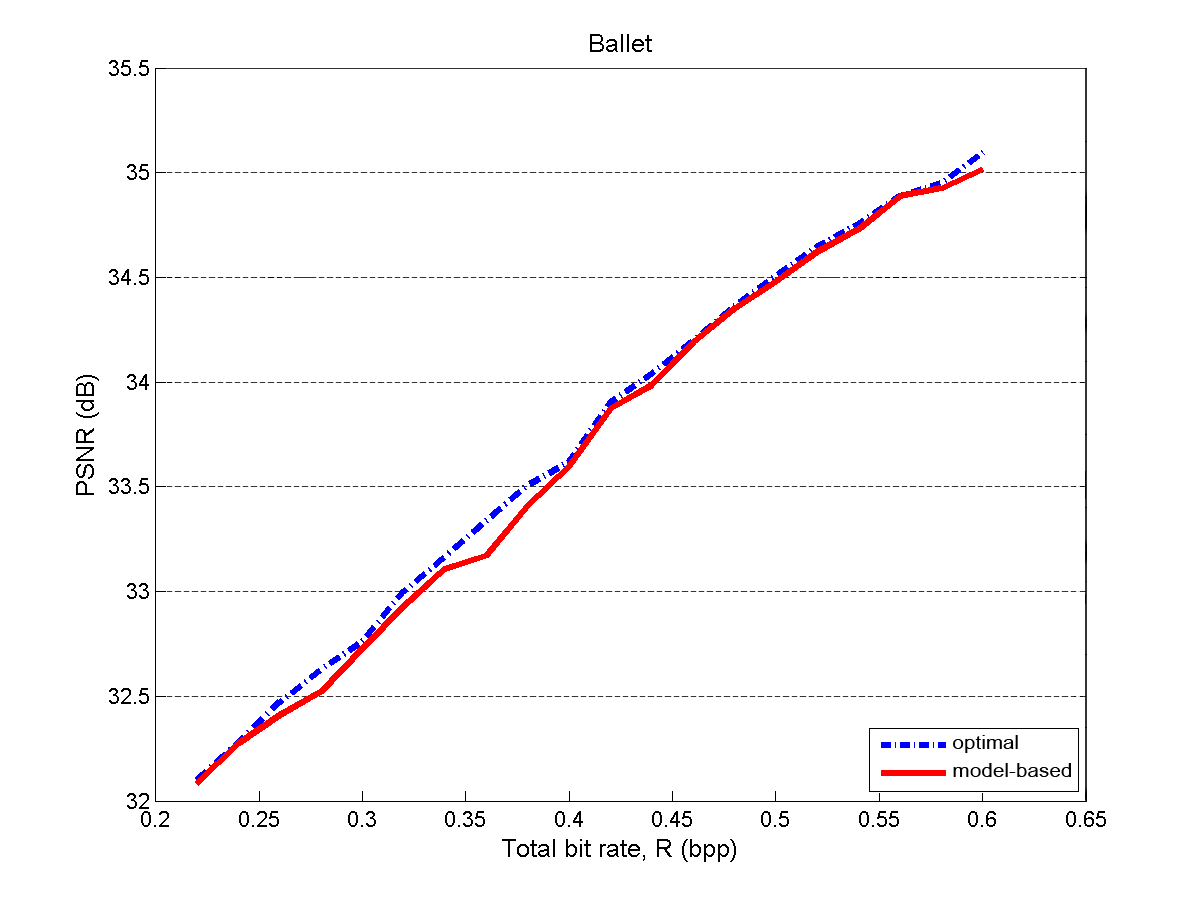,width=7cm}}
\end{minipage}
\hfill
\begin{minipage}[b]{.49\linewidth}
\centering
\leftline{\epsfig{figure=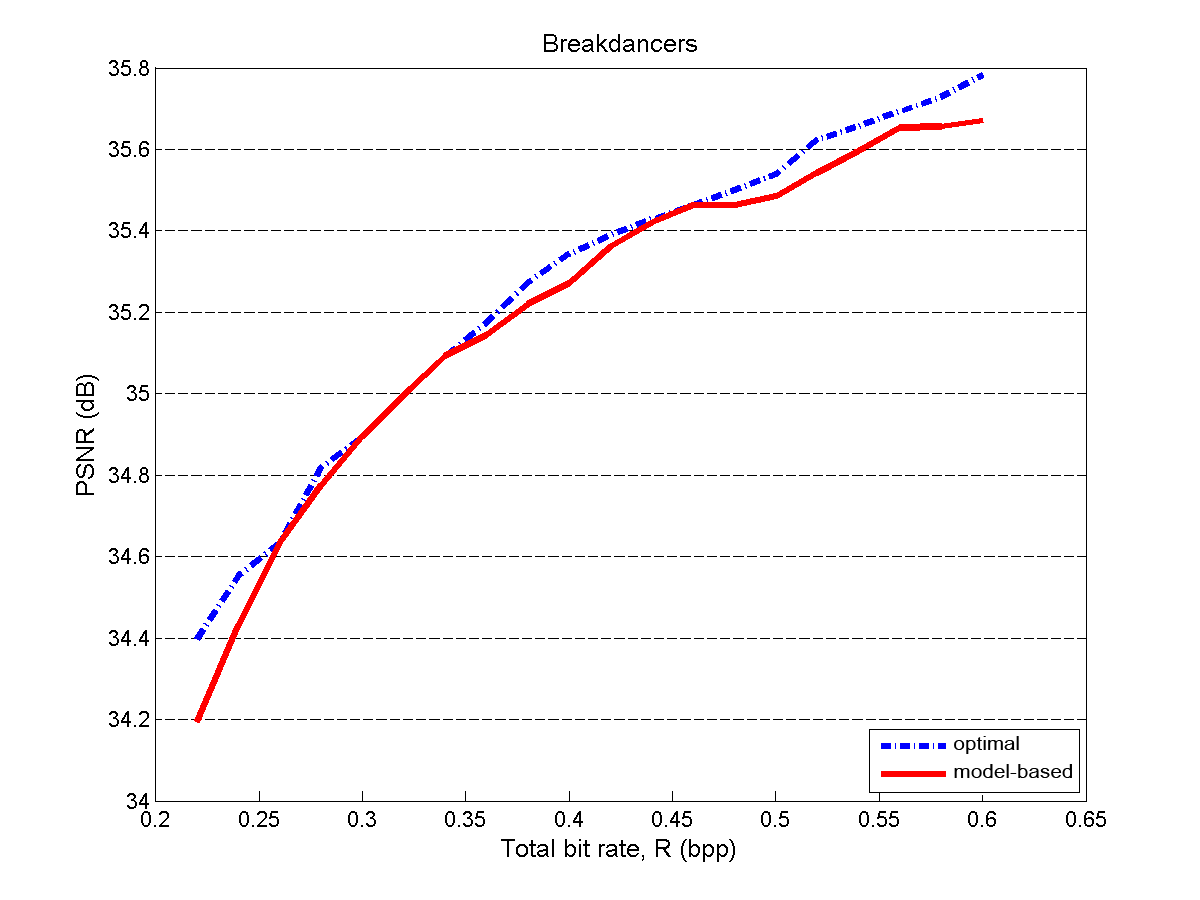,width=7cm}}
\end{minipage}
\caption{Comparison of coding performance for $\mathcal{B}^2_3(\mathbf{\mathcal{P}})$ using the model-based allocation method and the best allocation in terms of PSNR at rates ranging from 0.22 to 0.6 bpp; \emph{Ballet} (left) and \emph{Breakdancers} (right).}
\label{fig:PSNR-TwoRef-6Vir}
\end{figure*}

\begin{figure*}[tb]

\begin{minipage}[b]{.49\linewidth}
\centering
\rightline{\epsfig{figure=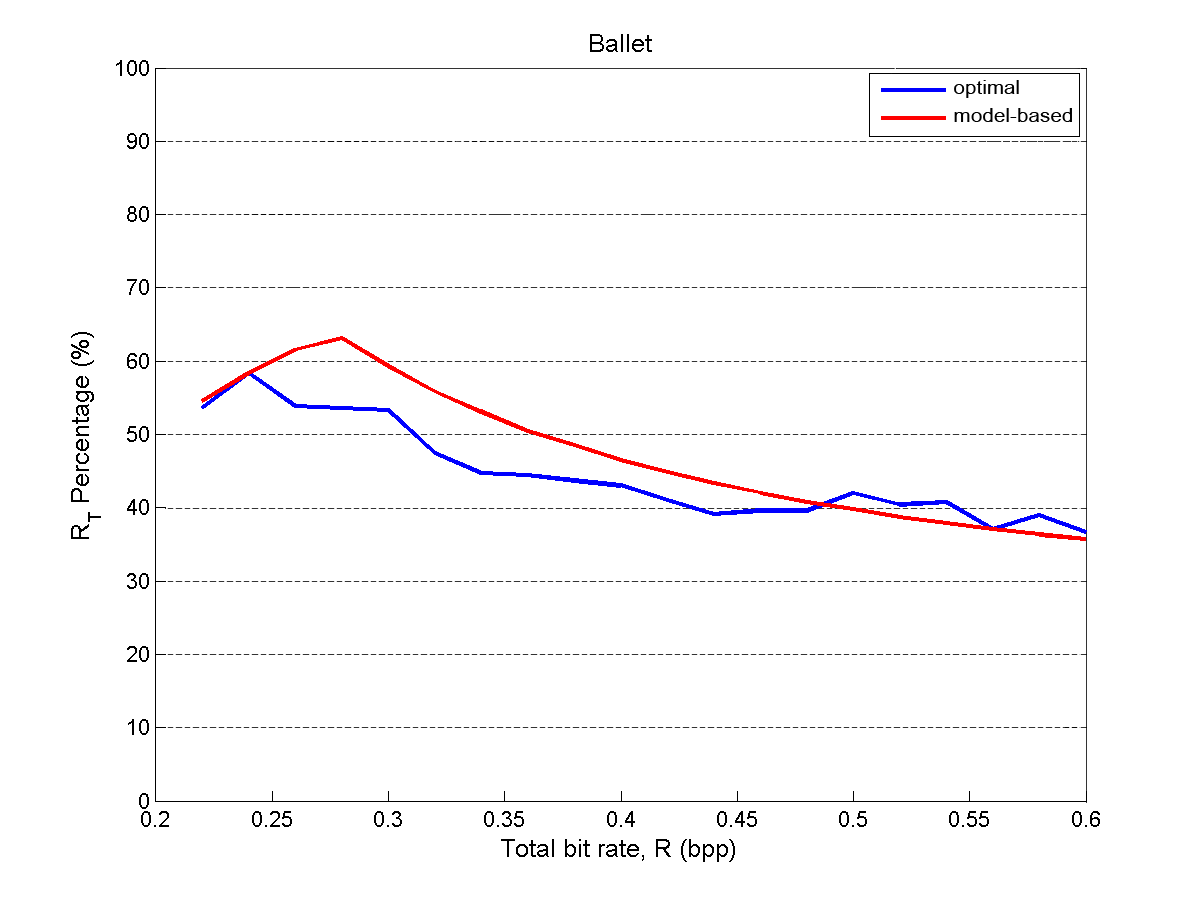,width=7cm}}
\end{minipage}
\hfill
\begin{minipage}[b]{.49\linewidth}
\centering
\leftline{\epsfig{figure=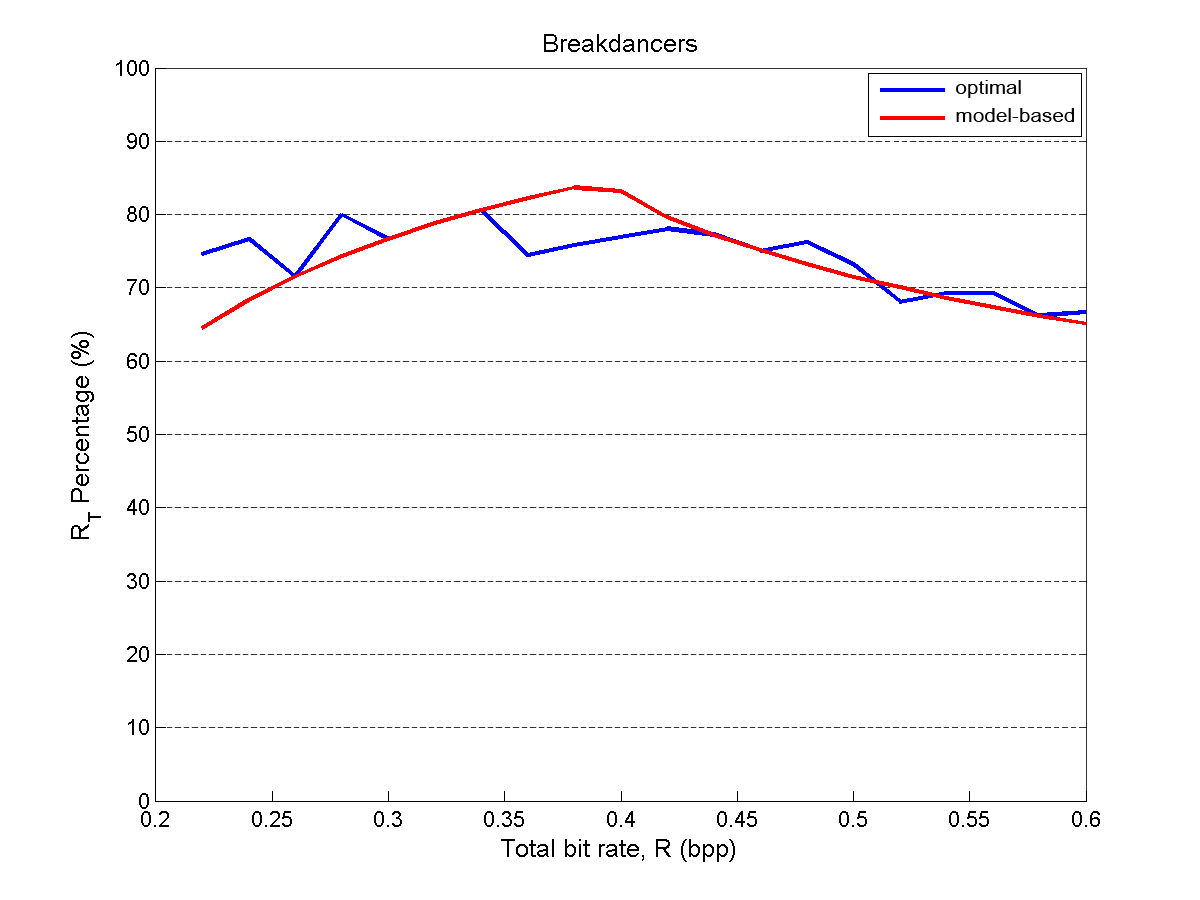,width=7cm}}
\end{minipage}
\caption{Rate allocation results of $\mathcal{B}^2_3(\mathbf{\mathcal{P}})$ using the model-based and the optimal allocation in terms of $R_t$ percentage at total rates ranging from 0.22 to 0.6 bpp; \emph{Ballet} (left) and \emph{Breakdancers} (right).}
\label{fig:RT-Percent-TwoRef-3Vir}
\end{figure*}

\begin{table*}[position specifier]
\caption{Performance penalty for fixed allocation in $\mathcal{B}^2_3(\mathbf{\mathcal{P}})$ - Comparison between the proposed model and a priori allocation policies in terms of average and maximum differences to the best achievable PSNR at total rates ranging from 0.22 to 0.6. The column headers indicate the a priori allocation of $R_t$ relatively to the total rate.}
\centering
\begin{tabular}{|cc|c|c|c|c|c|c|c|c|c|}
\hline
\multicolumn{2}{|c|}{$R_t$ percentage} & 20\% & 30\% & 40\% & 50\% & 60\% & 70\% & 80\% & our model\\ \cline{1-10}
\multicolumn{1}{|c|}{\multirow{2}{*}{\emph{Ballet}}} &
\multicolumn{1}{|c|}{Average (dB)} & 0.93 & 0.26 & \textbf{0.06} & 0.11 & 0.35 & 0.68 & 1.15 & \textbf{0.05}\\ \cline{2-10}
\multicolumn{1}{|c|}{}                        &
\multicolumn{1}{|c|}{Maximum (dB)} & 1.49 & 0.56 & 0.21 & 0.24 & 0.65 & 1.28 & 1.79 & 0.17\\ \hline\hline
\multicolumn{1}{|c|}{\multirow{2}{*}{\emph{Breakdancers}}} &
\multicolumn{1}{|c|}{Average (dB)} & 2.15 & 1.19 & 0.68 & 0.38 & 0.20 & 0.11 & \textbf{0.08} & \textbf{0.05} \\ \cline{2-10}
\multicolumn{1}{|c|}{}                        &
\multicolumn{1}{|c|}{Maximum (dB)} & 3.16 & 1.95 & 1.21 & 0.66 & 0.47 & 0.33 & 0.21 & 0.20 \\ \hline\hline
\multicolumn{1}{|c|}{\multirow{2}{*}{\emph{Overall}}} &
\multicolumn{1}{|c|}{Average (dB)} & 1.54 & 0.73 & 0.37 & \textbf{0.25} & 0.28 & 0.40 & 0.62 & \textbf{0.05} \\ \cline{2-10}
\multicolumn{1}{|c|}{}                        &
\multicolumn{1}{|c|}{Maximum (dB)} & 3.16 & 1.95 & 1.21 & 0.66 & 0.65 & 1.28 & 1.79 & 0.19 \\ \cline{1-10}
\end{tabular}
\label{tab:fixed-allocation-Tworef-Multiplevir}
\end{table*}

\subsection{$\mathcal{B}^p_q(\mathbf{\mathcal{P}})$ configuration}
We now consider the most general configuration, $\mathcal{B}^p_q(\mathbf{\mathcal{P}})$, with two reference cameras ($p=2$) and three equally spaced virtual views between them ($q=3$). The cameras 4 and 5 are considered as the two reference views and $A'_j$ and $R'_j$, $j=1,2,3$, for virtual views are set as the average of intrinsic and rotation matrices of our reference cameras. Each virtual view $v_j$ is generated in two steps. If $\pi$ is the position of $v_j$, then each of the reference views are projected into $\pi$ using depth map information. This step produces $v_{j,r}$ and $v_{j,l}$ as projection results from the right and left cameras, respectively. Next, we have
\begin{equation}\label{equ: v_j}
v_j=\frac{d_{j,l}}{d}v_{j,r}+\frac{d_{j,r}}{d}v_{j,l}
\end{equation}
where $d$ is the distance between two reference cameras, while $d_{j,l}$ and $d_{j,r}$ are the distances between $v_j$ and the left and right reference cameras, respectively.

The allocation problem in this case consists of distributing the available bit budget between two reference views and two depth maps. For comparison purposes, we calculate a DR hypersurface of the best allocation with $R_{t_1}$, $R_{t_2}$, $R_{d_1}$ and $R_{d_2}$ ranging from 0.1 to 0.6 bpp with 0.05 steps. Then for each target bit rate, $R$, the best allocation is the minimum of the resulting curve from cutting this hypersurface with the hyperplane $R_{t_1}+R_{t_2}+R_{d_1}+R_{d_2}=R$.

Figure \ref{fig:PSNR-TwoRef-6Vir} compares the best allocation and the model-based allocation in Eq. \eqref{equ: RDoptimization} for \emph{Ballet} and \emph{Breakdancers} datasets and target bit rates ranging from 0.2 to 0.6 bpp. Our allocation model yields to 0.05 dB loss in average in both cases and a maximum loss of 0.17 and 0.20 dB for \emph{Ballet} and \emph{Breakdancers}, respectively. Figure \ref{fig:RT-Percent-TwoRef-3Vir} shows the best and estimated allocations in terms of the percentage of the texture bits $(R_{t_1}+R_{t_2})$ relatively to the total bit rate. The advantage of using our model over the commonly used strategy of a priori rate allocation is shown in Table \ref{tab:fixed-allocation-Tworef-Multiplevir}. In the a priori allocation the bit rate assigned to each reference view and depth map is equal. For instance, in $\mathcal{B}^2_3(\mathbf{\mathcal{P}})$, if the total bit rate is 0.4 bpp and the a priori allocation is $40\%$, $R_{t_1}=R_{t_2}=0.08$ and $R_{d_1}=R_{d_2}=0.11$ bpp. Clearly our model outperforms the a priori allocation due to adaptivity to content and setup. From Tables \ref{tab:fixed-allocation-Oneref-Onevir} to \ref{tab:fixed-allocation-Tworef-Multiplevir}, we can conclude that the best performance of an a priori allocation strategy depends on the number of reference and virtual views and on the scene content. While our model-based allocation works well in all cases and gives this opportunity to determine number of virtual views later at decoder side.

\section{Conclusion}
We have addressed the rate-distortion analysis of multiview coding in a depth-image-based rendering context. In particular, we have shown that the distortion in the reconstruction of camera and virtual views at decoder is driven by the coding artifacts in both the reference images and the depth information. We have proposed a simple yet accurate model of the rate-distortion characteristics for simple scenes and different camera configurations. We have used our novel model for deriving effective allocation of bit rate between reference and depth images. One of the interesting features of our algorithm, beyond its simplicity, consists in avoiding the need for view synthesis at encoder, contrarily to what is generally used in state-of-the-art solutions. We finally demonstrate in extensive experiments that our simple model nicely extends to complex multiview scenes with arbitrary numbers of reference and virtual views. It leads to an effective allocation of bit rate with close-to-optimal quality under various rate constraints. In particular, our rate allocation outperforms common strategies based on static rate allocation, since it is adaptive to the scene content. Finally, we plan to extend our analysis to multiview video encoding where motion compensation poses non-trivial challenges in rate allocation algorithms due to additional coding dependencies.

\section*{Acknowledgements}
This work has been partially supported by Iran Ministry of Science, Research and Technology and the Swiss National Science Foundation under grant 200021\_126894.

\bibliographystyle{unsrt}
\bibliography{Journal_draft}

\begin{thebibliography}{10}

\bibitem{Zhang12}
Z.~Zhang.
\newblock Microsoft kinect sensor and its effect.
\newblock {\em IEEE Multimedia}, 19:4--10, 2012.

\bibitem{Merkle07}
P.~Merkle, A.~Smolic, K.~Muller, and T.~Wiegand.
\newblock Efficient prediction structures for multiview video coding.
\newblock {\em IEEE Trans. on Circ. and Syst. for Video Technology},
  17(11):1461--1473, 2007.

\bibitem{Vetro11}
A.~Vetro, T.~Wiegand, and G.~Sullivan.
\newblock Overview of the stereo and multiview video coding extensions of the
  {H}.264/{MPEG}-4 {AVC} standards.
\newblock {\em Proc. of the IEEE}, 99(4):626--642, 2011.

\bibitem{Muller11}
K.~M\"uller, P.~Merkle, and T.~Wiegand.
\newblock {3D Video representation using depth maps}.
\newblock {\em Proc. of the IEEE}, 99(4):643--656, 2011.

\bibitem{Tian09}
D.~Tian, P.~Lai, P.~Lopez, and C.~Gomila.
\newblock View synthesis techniques for {3D} video.
\newblock {\em Proc. Int. Symp. on Optics, Imaging, and Instrumentation}, 7443,
  2009.

\bibitem{Fehn04}
C.~Fehn.
\newblock Depth-image-based rendering ({DIBR}), compression and transmission
  for a new approach on {3D-TV}.
\newblock {\em Proc. SPIE, Stereoscopic Image Process. Render.}, 5291:93--104,
  2004.

\bibitem{Shao11}
F.~Shao, GY~Jiang, M.~Yu, and Y.~Zhang.
\newblock Object-based depth image-based rendering for a three-dimensional
  video system by color-correction optimization.
\newblock {\em Opt. Eng.}, 50:047006--047006--10, 2011.

\bibitem{Oh09}
K.-J. Oh, S.~Yea, and Y.-S. Ho.
\newblock Hole filling method using depth based in-painting for view synthesis
  in free viewpoint television and 3-{{D}} video.
\newblock In {\em Proc. Picture Coding Symp.}, Chicago, IL, May 2009.

\bibitem{Cheng08}
C.-M. Cheng, S.-J. Lin, S.-H. Lai, and J.-C. Yang.
\newblock Improved novel view synthesis from depth image with large baseline.
\newblock In {\em Proc. Int. Conf. Pattern Recognit.}, Tampa, FL, Dec. 2008.

\bibitem{Merkle09}
P.~Merkle, Y.~Morvan, A.~Smolic, D.~Farin, K.~Muller, P.~H.~N. de~With, and
  T.Wiegand.
\newblock The effects of multiview depth video compression on multiview
  rendering.
\newblock {\em Signal Processing: Image Communication}, 24:73--88, 2009.

\bibitem{Maitre10}
M.~Maitre and M.~N. Do.
\newblock Depth and depth-color coding using shape-adaptive wavelets.
\newblock {\em J. of Vis. Commun. and Image Repr.}, 21:513--522, 2010.

\bibitem{Sanchez09}
A.~Sanchez, G.~Shen, and A.~Ortega.
\newblock Edge-preserving depth-map coding using graph-based wavelets.
\newblock In {\em Proc. Asilomar Conference on Signals, Systems and Computers},
  pages 578--582, Los Angeles, CA, Nov. 2009.

\bibitem{Daribo08}
I.~Daribo, C.~Tillier, and B.~Pesquet-Popescu.
\newblock Adaptive wavelet coding of the depth map for stereoscopic view
  synthesis.
\newblock In {\em Proc. IEEE Int. Workshop on Multimedia Sig. Proc.}, pages
  413--417, Paris, France, Oct. 2008.

\bibitem{Milani11}
S.~Milani, P.~Zanuttigh, M.~Zamarin, and S.~Forchhammer.
\newblock Efficient depth map compression exploiting segmented color data.
\newblock In {\em Proc. Int. Conf. on Multimedia and Expo.}, pages 1--6, July
  2011.

\bibitem{AVC1}
ITU-T and ISO/IEC~JTC 1.
\newblock Advanced video coding for generic audiovisual services.
\newblock {TU-T Recommendation H.264 and ISO/IEC 14496-10 (MPEG-4 AVC), Version
  1: May 2003, Version 2: May 2004, Version 3: Mar. 2005 (including FRExt
  extension), Version 4: Sep. 2005, Version 5 and Version 6: Jun. 2006, Version
  7: Apr. 2007, Version 8: Jul. 2007 (including SVC extension), Version 9: Jul.
  2009 (including MVC extension)}.

\bibitem{AVC2}
ISO/IEC JTC1/SC29/WG11.
\newblock {Text of ISO/IEC 14496-10:200X/ FDAM 1 multiview video coding}.
\newblock {Doc. N9978}, Hannover, Germany, Jul. 2008.

\bibitem{Ekmekcioglu11}
E.~Ekmekcioglu, V.~Velisavljevic, and S.~T. Worrall.
\newblock Content adaptive enhancement of multi-view depth maps for free
  viewpoint video.
\newblock {\em IEEE J. of Selected Topics in Sig. Process.}, 5(2):352--361,
  2011.

\bibitem{Lee11}
J.~Y. Lee, H.-C. Wey, , and D.-S. Park.
\newblock A fast and efficient multi-view depth image coding method based on
  temporal and inter-view correlations of texture images.
\newblock {\em IEEE Trans. on Circ. and Syst. for Video Technology},
  21(12):1859--1868, 2011.

\bibitem{Liu09}
Y.~Liu, S.~Ma, Q.~Huang, D.~Zha, W.~Gao, and N.~Zhang.
\newblock Compression-induced rendering distortion analysis for texture/depth
  rate allocation in 3d video compression.
\newblock In {\em Proc. Data Compression Conference}, pages 352--361, Beijing,
  China, Mar. 2009.

\bibitem{Nguyen09}
H.~T. Nguyen and M.~N. Do.
\newblock Error analysis for image-based rendering with depth information.
\newblock {\em IEEE on Image Process.}, 18(4):703--716, 2009.

\bibitem{Kim10}
W.-S. Kim, A.~Ortega, P.~Lai, D.~Tian, and C.~Gomila.
\newblock Depth map coding with distortion estimation of rendered view.
\newblock {\em Proc. SPIE Visual Inf. Process. Commun},
  7543:75430B–--75430B--10, 2010.

\bibitem{Oh11}
B.~T. Oh, J.~Lee, and D.-S. Park.
\newblock Depth map coding based on synthesized view distortion function.
\newblock {\em IEEE J. of Selected Topics in Sig. Process.}, 5(7):1344--1352,
  2011.

\bibitem{Davidoiu11}
V.~Davidoiu, T.~Maugey, B.~Pesquet-Popescuy, and P.~Frossard.
\newblock Rate distorsion analysis in a disparity compensated scheme.
\newblock In {\em Proc. IEEE Int. Conf. Acoustics, Speech, and Signal
  Processing}, pages 857--860, Paris, France, May 2011.

\bibitem{Maitre08}
M.~Maitre and M.~N. Do.
\newblock Joint encoding of the depth image based representation using
  shape-adaptivewavelets.
\newblock In {\em Proc. IEEE Int. Conf. on Image Processing (ICIP)}, pages
  1768--1771, Urbana, IL, Oct. 2008.

\bibitem{Wang12}
Q.~Wang, X.~Ji, Q.~Dai, and N.~Zhang.
\newblock Free viewpoint video coding with rate-distortion analysis.
\newblock {\em IEEE Trans. on Circ. and Syst. for Video Technology},
  22(6):875--889, 2012.

\bibitem{Cheung11}
G.~Cheung, V.~Velisavljevic, and A.~Ortega.
\newblock On dependent bit allocation for multiview image coding with
  depth-image-based rendering.
\newblock {\em IEEE Trans. on Image Process.}, 20(11):3179--3194, 2011.

\bibitem{Gelman12}
A.~Gelman, P.~L. Dragotti, and V.~Velisavljevic.
\newblock Multiview image coding using depth layers and an optimized bit
  allocation.
\newblock {\em IEEE Trans. on Image Process.}, 21(9):4092--4105, 2012.

\bibitem{Don99}
D.~L. Donoho.
\newblock Wedgelets: Nearly minimax estimation of edges.
\newblock {\em Annals of Stat.}, 27, 1999.

\bibitem{Pen05}
E.~Le Pennec and S.~Mallat.
\newblock Sparse geometrical image approximation with bandelets.
\newblock {\em IEEE Trans. Image Process.}, 14(4):423--438, 2005.

\bibitem{Mal12}
A.~Maleki, B.~Rajaei, and H.~R. Pourreza.
\newblock Rate-distortion analysis of directional wavelets.
\newblock {\em IEEE Trans. Image Process.}, 21(2):588--600, 2012.

\bibitem{Mal97}
S.~Mallat.
\newblock {\em A Wavelet Tour of Signal Processing}.
\newblock Academic Press, San Diego, CA, 1997.

\bibitem{Pra99}
P.Prandoni and M.Vetterli.
\newblock Approximation and compression of piecewise smooth functions.
\newblock {\em Phil. Trans. Royal Society London}, 357(1760):2573--2591, 1999.

\bibitem{Cover06}
T.~M. Cover and J.~A. Thomas.
\newblock {\em Elements of Information Theory (Telecommunications and Signal
  Processing)}.
\newblock Wiley, New York, 2006.

\bibitem{microsoft}
{Sequence Microsoft Ballet and Breakdancers}.
\newblock 2004.
\newblock [Online]. Available:
  http://research.microsoft.com/en-us/um/people/sbkang/3dvideodownload.

\end{thebibliography}

\end{document}